\documentclass{article}
\PassOptionsToPackage{numbers,sort&compress}{natbib}
\usepackage{PRIMEarxiv}
\usepackage[utf8]{inputenc}
\usepackage[T1]{fontenc}
\usepackage{natbib}
\usepackage{nicefrac}
\usepackage{microtype}

\usepackage{amsmath,amssymb,amsthm,mathtools,bm}
\usepackage{enumitem}
\usepackage{booktabs}
\usepackage{multirow}
\usepackage{graphicx}
\graphicspath{{./}{media/}}
\usepackage{algorithm}
\usepackage{algorithmic}
\usepackage{xcolor}
\usepackage{tikz}
\usepackage{tabularx}
\usepackage{array}
\usepackage{pgfplots}
\pgfplotsset{compat=1.18}        
\usetikzlibrary{calc,arrows.meta}
\definecolor{gold}{RGB}{243,156,18}
\definecolor{silver}{RGB}{110,128,138}
\definecolor{ribbonblue}{RGB}{76,149,214}

\usepackage{twemojis} 
\newcommand{\goldmedal}{\raisebox{-0.25ex}{\twemoji[height=1.45\fontcharht\font`X]{1st place medal}}}
\newcommand{\silvermedal}{\raisebox{-0.25ex}{\twemoji[height=1.45\fontcharht\font`X]{2nd place medal}}}
\newcommand{\best}[1]{\textbf{#1}\,\goldmedal}
\newcommand{\second}[1]{#1\,\silvermedal}

\usepackage{colortbl}
\usepackage[most]{tcolorbox}

\definecolor{reproblue}{RGB}{45,92,145}
\definecolor{reprobluebg}{RGB}{244,248,252}

\definecolor{reprogray}{RGB}{92,102,112}
\definecolor{reprograybg}{RGB}{247,248,249}

\definecolor{reproorange}{RGB}{180,105,32}
\definecolor{reproorangebg}{RGB}{253,248,241}

\definecolor{reprogreen}{RGB}{53,116,83}
\definecolor{reprogreenbg}{RGB}{244,249,246}

\newtcolorbox{configbox}[1][]{
    colback=reprobluebg,
    colframe=reproblue!55,
    boxrule=0.45pt,
    arc=1.5pt,
    left=5pt,right=5pt,top=4pt,bottom=4pt,
    before skip=5pt,
    after skip=6pt,
    #1
}

\newtcolorbox{notebox}[1][]{
    colback=reprograybg,
    colframe=reprogray!40,
    boxrule=0.4pt,
    arc=1.5pt,
    left=5pt,right=5pt,top=4pt,bottom=4pt,
    before skip=5pt,
    after skip=6pt,
    #1
}

\newtcolorbox{caveatbox}[1][]{
    colback=reproorangebg,
    colframe=reproorange!55,
    boxrule=0.45pt,
    arc=1.5pt,
    left=5pt,right=5pt,top=4pt,bottom=4pt,
    before skip=5pt,
    after skip=6pt,
    #1
}

\newcommand{\cfg}[1]{\textcolor{reproblue}{\texttt{#1}}}
\newcommand{\cfglabel}[1]{\textcolor{reproblue}{\textbf{#1}}}

\def\Secref#1{Section~\ref{#1}}

\def\1{\bm{1}}

\DeclareMathAlphabet{\mathsfit}{\encodingdefault}{\sfdefault}{m}{sl}
\SetMathAlphabet{\mathsfit}{bold}{\encodingdefault}{\sfdefault}{bx}{n}

\newcommand{\E}{\mathbb{E}}

\newcommand{\R}{\mathbb{R}}

\newcommand{\softmax}{\mathrm{softmax}}

\ifcsname theorem\endcsname\else
  \newtheorem{theorem}{Theorem}[section]
\fi
\ifcsname proposition\endcsname\else
  \newtheorem{proposition}[theorem]{Proposition}
\fi
\ifcsname lemma\endcsname\else
  \newtheorem{lemma}[theorem]{Lemma}
\fi
\ifcsname corollary\endcsname\else
  \newtheorem{corollary}[theorem]{Corollary}
\fi
\theoremstyle{definition}
\ifcsname definition\endcsname\else
  
\fi
\ifcsname assumption\endcsname\else
  \newtheorem{assumption}[theorem]{Assumption}
\fi
\theoremstyle{remark}
\ifcsname remark\endcsname\else
  
\fi
\theoremstyle{plain} 

\newcommand{\method}{\textsc{OVAL}}

\DeclareMathOperator{\Diag}{Diag}

\DeclareMathOperator{\trace}{tr}

\providecommand{\norm}[1]{\left\lVert #1\right\rVert}
\newcommand{\St}{\mathrm{St}}
\newcommand{\cR}{\mathcal{R}}
\newcommand{\onev}{\bm{1}}

\usepackage{hyperref}
\usepackage{url}

\title{
OVAL: Output-Aware Local Page Bases for KV Cache Retrieval

}
\author{
  Ashkan Shahbazi, Chayne Thrash, Soheil Kolouri  \vspace{0.1in}\\
  College of Connected Computing \\
  Vanderbilt University \\
  Nashville, TN, USA \\
  \texttt{\{ashkan.shahbazi,chayne.thrash, soheil.kolouri\}@vanderbilt.edu}
}

\begin{document}

\maketitle

\begin{abstract}
Long context inference with large language models becomes increasingly expensive as attention must operate over an ever growing KV cache. Page sparse attention reduces this cost by representing each KV page compactly and retrieving only a subset for each query. Existing retrieval methods are designed to estimate attention scores or page relevance, but their objectives do not directly account for how approximation errors affect the resulting value weighted attention output. We introduce \method{}, an output aware page encoding derived from the joint structure of keys and values while preserving the key information needed for accurate retrieval. \method{} is training free and requires no additional value dependent statistics at inference time. Once constructed, its stored representation has the same size and decode time scoring cost as a key only spectral representation. Across long reasoning, long context understanding, and long generation benchmarks, \method{} consistently improves over the key only spectral baseline and performs competitively with recent KV cache compression and retrieval methods. On long reasoning benchmarks, it achieves strong avg@\(k\) performance across model benchmark pairs, while matching or surpassing leading baselines on several long context understanding and generation settings with modest decoding overhead. Code is available at \url{https://github.com/Ashkan13776/oval-kv}.
\end{abstract}

\section{Introduction}
\label{sec:intro}

Large language models rely on self attention to retrieve information from increasingly long sequences, enabling applications from long document understanding and multi turn dialogue to code generation and extended reasoning \citep{NIPS2017_3f5ee243, bai-etal-2024-longbench,bai-etal-2025-longbench, ICLR2025_141304a3}. This flexibility comes at substantial computational cost. For a sequence of length \(n\), standard self attention compares every query with every key, giving quadratic complexity during full sequence processing \citep{NIPS2017_3f5ee243,dao2024flashattention}. For contexts spanning tens or hundreds of thousands of tokens, attention therefore incurs substantial storage and bandwidth costs \citep{fu2024challengesdeployinglongcontexttransformers, bai-etal-2025-longbench}.

Autoregressive inference avoids recomputing previous representations by maintaining a key value cache, or KV cache. At each layer, keys and values from previous tokens are stored and reused during generation. For a new query \(q\), the model scores the cached keys, normalizes the logits into attention weights, and forms the output as a weighted combination of the corresponding values. Although this removes redundant computation, the KV cache grows linearly with sequence length, number of layers, batch size, and number of KV heads \citep{ainslie-etal-2023-gqa,NEURIPS2023_6ceefa7b}. Each decoding step must also read and score the cached history, making long context inference increasingly dominated by KV access costs \citep{pmlr-v235-tang24l,pmlr-v267-sun25b,298683}.

In practice, this history is not managed as an unstructured collection of tokens. Modern inference systems group consecutive KV entries into fixed size blocks, or \emph{pages}, which form the natural unit of cache management and sparse retrieval \citep{kwon2023efficientmemorymanagementlarge}. Page retrieval methods first identify a small subset of relevant pages and then perform exact attention over the tokens they contain. To do so efficiently, they maintain a compact summary for each page and use it to estimate relevance to the current query \citep{pmlr-v235-tang24l,pmlr-v267-sun25b}. The effectiveness of this pipeline therefore depends critically on what information the page summary preserves.

Existing methods answer this question primarily through the keys. They estimate page importance using extrema, low dimensional key representations, geometric statistics, or spectral summaries \citep{pmlr-v235-tang24l,chen2024arkvale,pmlr-v267-sun25b,298683,hwang2026lockspagelocalcompactkey,deng2026uniqueuniversaltopksparse}. This is natural because keys determine the attention logits. However, preserving logits is not the final objective of an attention layer. What is propagated to the rest of the network is the attention output. Let \(o(q)=\sum_i \alpha_i(q)v_i\), where \(\alpha(q)=\softmax(Kq/\sqrt d)\). The effect of an error in the attention distribution therefore depends on the associated values. A large logit error may have little effect when attention is redistributed among tokens with similar values, while a small error along a weak key direction may substantially change the output if it shifts probability between very different values. Thus, a high variance key direction can be unimportant to the output, while a lower variance direction can be critical. Any criterion based only on key geometry cannot distinguish these cases. This motivates the central question: \emph{what information should a page preserve to maintain the attention output?}

Our key observation is that not all attention score errors matter equally. An error matters only when it redistributes attention between tokens whose values are meaningfully different. Therefore, the importance of a key direction cannot be determined from the keys alone; it depends on how keys and values vary together within the page. Guided by this observation, we construct each page summary jointly from its keys and values, prioritizing the directions that matter most to the resulting attention output rather than simply preserving the dominant variation in key space. Output fidelity, however, is only one role of a page summary. Before a page can contribute to the output, it must first be retrieved, which requires an accurate estimate of its attention mass. We show that these two objectives have different local sensitivities: conditional attention output error is generically first order in discarded logit directions, while page log mass error is locally second order. This motivates a hybrid objective that combines output sensitivity with the conventional key covariance. The resulting basis is obtained through a small page local eigenproblem. At one endpoint, it reduces to ordinary page local key PCA; moving away from this endpoint introduces value information according to its effect on the attention output.

We call the resulting method \method{}. \method{} uses both keys and values when constructing each page basis, but stores no value statistics and requires no value dependent computation during retrieval. Values are accessed only when a completed page summary is constructed. At decode time, the summary has the same dimensionality and scoring complexity as a key only spectral page summary. Across long reasoning, long context understanding, long generation, and efficiency benchmarks \citep{hendrycksmath2021, bai-etal-2025-longbench, ICLR2025_141304a3, 298683}, the output aware component consistently improves over the key PCA endpoint on long reasoning tasks and yields strong performance against contemporary KV cache compression and retrieval methods.

\noindent\textbf{Our main contributions are:}
\begin{itemize}

\item We identify a mismatch between key driven page retrieval objectives and
the attention output that ultimately matters, and derive the exact sensitivity
of the conditional attention output to within page logit perturbations.

\item We derive a query free output aware rank $r$ page basis governed by
centered key value coupling, and combine it with key fidelity through a single
normalized parameter $\eta$, yielding a closed form spectral construction.

\item With the best evaluated $\eta$, \method{} achieves the highest avg@k among compressed methods in all long reasoning settings (+3.7--3.9 points) and the best LongBench~v2 Overall score on two of three models.

\item We study practical selection of $\eta$ using a disjoint calibration
set and evaluate how well a single calibrated value transfers across tasks
relative to the per setting $\eta^\star$.

\end{itemize}

\section{Related Work}
\label{sec:related}

\noindent\textbf{KV cache compression.}
Several complementary approaches reduce the KV cache footprint. Grouped query attention shares KV heads across queries \citep{ainslie-etal-2023-gqa}, quantization lowers the precision of cached entries \citep{hooper2024kvquant}, and low-rank methods such as STAR-KV project the cache onto an adaptively ranked subspace \citep{bhatnagar2026starkv}. Eviction methods keep only part of the history based on attention importance, recency, or prompt observations \citep{NEURIPS2023_6ceefa7b, NEURIPS2024_28ab4182, tang2025razorattention, xiao2024duoattentionefficientlongcontextllm}. CriticalKV grounds eviction in attention output perturbation and shows that value states matter beyond attention weights \citep{feng2026criticalkv}. In all of these, evicted information cannot be recovered if its relevance changes later in generation.

\noindent\textbf{Sparse attention and KV retrieval.}
Retrieval methods instead estimate which cached regions matter for the current query and attend only to the selected subset \citep{pmlr-v235-tang24l, xiao2024duoattentionefficientlongcontextllm, chen2024arkvale, pmlr-v267-sun25b, liu2026freekvboostingkvcache}. At page granularity, Quest bounds page scores with coordinatewise key minima and maxima \citep{pmlr-v235-tang24l}, ArkVale summarizes pages with compact geometric representations \citep{chen2024arkvale}, and UNIQUE uses key mean and dispersion statistics \citep{pmlr-v235-tang24l}. At token granularity, several methods score keys cheaply in a reduced space. SparQ uses a subset of query components \citep{pmlr-v235-ribar24a}, Loki uses a PCA basis of keys \citep{singhania2024loki}, ShadowKV reconstructs selected entries from a low-rank key cache \citep{pmlr-v267-sun25b}, and InfiniGen prefetches entries predicted from partial key information \citep{xiao2024duoattentionefficientlongcontextllm}. Other methods borrow from vector search. PQCache retrieves tokens by maximum inner product search over product-quantized keys \citep{10.1145/3725338}, RaBitQCache estimates attention weights with rotated binary quantization to enable adaptive top-$p$ selection \citep{li2026rabitqcache}, and ParisKV combines collision-based candidate selection with quantized inner-product reranking \citep{qi2026pariskv}. MagicPIG argues that top-$k$ selection degrades when attention is not sparse and instead samples keys with locality-sensitive hashing \citep{chen2025magicpig}. Still other methods predict important tokens or blocks using learned predictors, clustering, or structured sparsity \citep{akhauri2026tokenbutlertokenimportancepredictable, yang2025attentionpredictortemporalpatternsmatter, liu2025clusterkvmanipulatingllmkv, gao2025seerattention, yuan-etal-2025-native, NEURIPS2025_19eae75b, wan2023seaformer}.

\noindent\textbf{Page level spectral retrieval.}
Closest to our setting, LOCKS builds a page-local spectral basis from keys to approximate within-page logits and estimate page attention mass \citep{hwang2026lockspagelocalcompactkey}. We keep this spectral summary structure but change the basis objective. Rather than preserving directions by key energy alone, we use paired key–value geometry to prioritize the directions that matter to the attention output. This choice is consistent with evidence that value states help identify critical entries \citep{feng2026criticalkv}.

\vspace{-0.15in}
\section{Method}
\label{sec:method}
\vspace{-0.05in}
Key PCA retains directions with the most key energy. \method{} combines
key energy with the sensitivity of the conditional attention output to
choose the basis. Its retrieval summary has the same format as key PCA. Throughout, keys and queries include the model’s positional transformations, including RoPE.
Proofs, remainder bounds, and extensions to nonisotropic queries and
nonuniform reference attention are given in Appendix~\ref{app:proofs}.

\begin{figure*}[t]
    \centering
    \includegraphics[width=\linewidth]{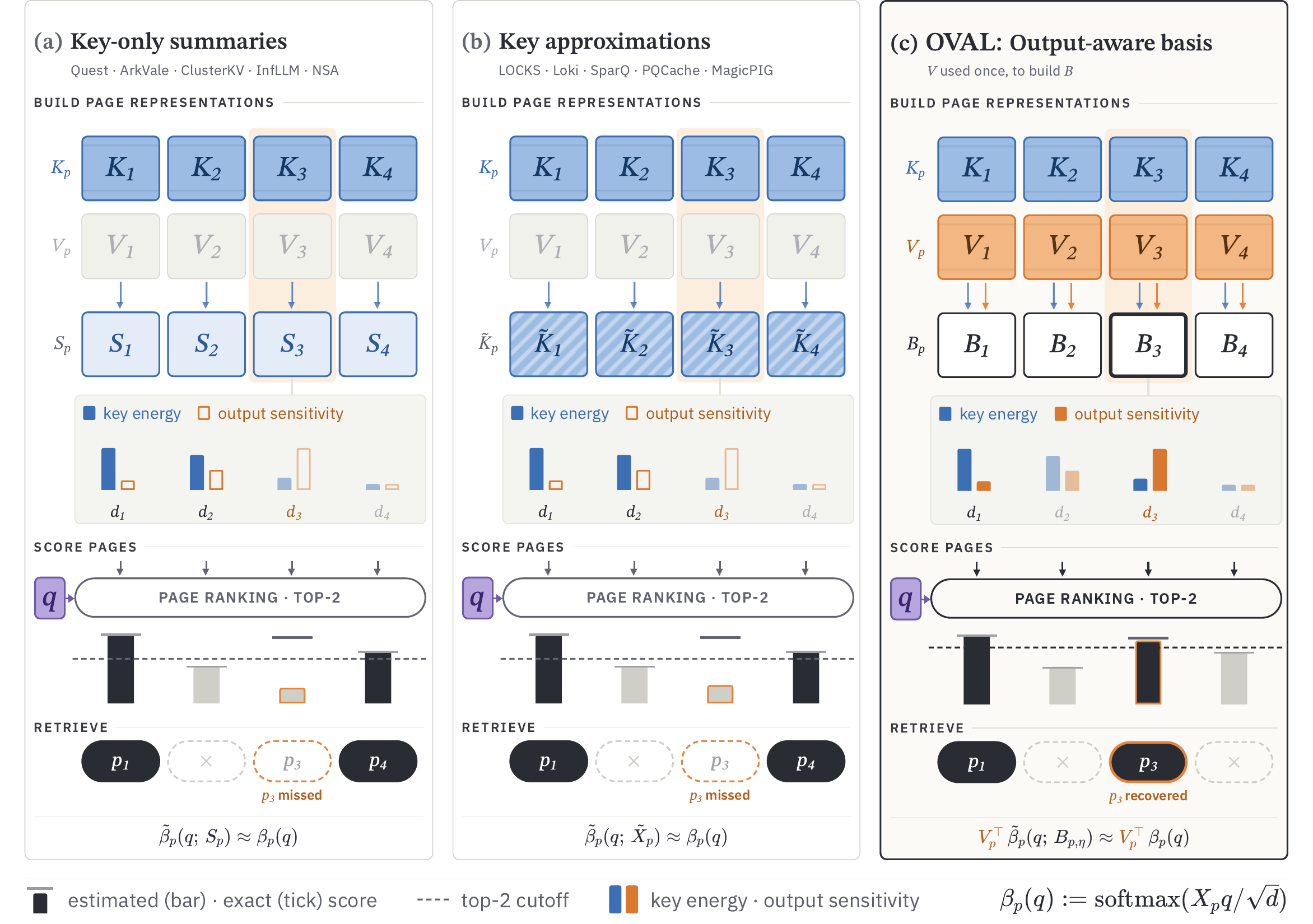}
    \vspace{-0.1in}
    \caption{
    Overview of page representations for sparse KV retrieval.
(a) Key only summaries and (b) key approximations construct retrieval representations from key information alone and therefore do not explicitly account for output sensitivity. Consequently, an output important page such
as $p_3$ can fall below the top $2$ retrieval cutoff.
(c) \method{} instead uses keys and values when constructing an output aware
basis, preserving directions important to both key fidelity and output
sensitivity.
    }
    \label{fig:teaser}
    \vspace{-0.2in}
\end{figure*}

\subsection{Setup and objectives}
\label{sec:method-setup}

Consider one attention head and a completed page of $m\geq1$ key--value
pairs. Let $K\in\R^{m\times d}$ and $V\in\R^{m\times d_v}$ have rows
$k_i^\top$ and $v_i^\top$. With $\onev\in\R^m$ the all-ones vector,
define the page means and centered matrices by
\[
\begin{aligned}
\mu=\frac1mK^\top\onev,\qquad  X=K-\onev\mu^\top,\qquad
\nu=\frac1mV^\top\onev,\qquad  Y=V-\onev\nu^\top.
\end{aligned}
\]
Thus $\onev^\top X=0$ and $\onev^\top Y=0$. For a query $q\in\R^d$, set
$z(q)=Xq/\sqrt d$ and define
$\operatorname{lse}(z)=\log\bigl(\sum_i e^{z_i}\bigr)$. The log of the
page's unnormalized attention mass is
\[
\ell(q)=\log\!\left(\sum_{i=1}^m e^{k_i^\top q/\sqrt d}\right)
=\frac{\mu^\top q}{\sqrt d}+\operatorname{lse}(z(q)).
\]
For a fixed query, the global attention denominator is shared by all
pages, so ranking by $\ell(q)$ ranks their total attention probability. Let $I_n$ denote the $n\times n$ identity matrix and define
$\St(d,r)=\{B\in\R^{d\times r}:B^\top B=I_r\}$ for $1\leq r<d$.
An orthonormal basis $B\in\St(d,r)$ of an $r$-dimensional subspace
defines the projector $P_B=BB^\top$. We retain $\mu$ exactly and
approximate the centered keys by $XP_B$, giving
\begin{equation}
\widehat z_B(q)=\frac{XP_Bq}{\sqrt d},\quad
e_B(q)=z(q)-\widehat z_B(q)=\frac{X(I_d-P_B)q}{\sqrt d},\quad
\widehat\ell_B(q)=\frac{\mu^\top q}{\sqrt d}+\operatorname{lse}(\widehat z_B(q)).
\label{eq:approx-logits}
\end{equation}
The same logit residual $e_B(q)$ drives both objectives below.

\paragraph{Retrieval error.}
Since log-sum-exp is $1$-Lipschitz in the infinity norm,
\[
|\ell(q)-\widehat\ell_B(q)|
\leq\norm{e_B(q)}_\infty\leq\norm{e_B(q)}_2.
\]
Thus the expected squared logit residual bounds the expected squared
page log-mass error.

\paragraph{Output error.}
The output conditional on attending within the page is
\[
\beta(z)=\softmax(z),
\qquad
f(z)=V^\top\beta(z)=\nu+Y^\top\beta(z).
\]
The common mean key logit cancels in this within-page normalization.
Redistributing attention among identical values leaves $f(z)$ unchanged;
its dependence on the logits is governed by value differences.
At uniform attention, $z=0$ and $\beta(0)=\onev/m$. For any logit
perturbation $h\in\R^m$,
\[
\left.\frac{d}{dt}f(th)\right|_{t=0}=\frac1mY^\top h.
\]
Linearizing both $f(z(q))$ and $f(\widehat z_B(q))$ at zero therefore
gives the linearized output error $Y^\top e_B(q)/m$. We measure it using
the seminorm $\norm{a}_G=(a^\top Ga)^{1/2}$, with
$G\in\R^{d_v\times d_v}$ symmetric positive semidefinite. For example,
$G=W_O^\top W_O$ measures the perturbation after applying the current
head's output projection block $W_O$.

\begin{assumption}[Query second moment]
\label{ass:query-second-moment}
The queries satisfy $\E_q[qq^\top]=I_d$.
\end{assumption}

Under Assumption~\ref{ass:query-second-moment}, the two risks are
\begin{align}
\cR_{\mathrm{key}}(B)
&=\E_q\norm{e_B(q)}_2^2
=\frac1d\trace\!\left((I_d-P_B)X^\top X(I_d-P_B)\right),
\label{eq:key-risk}\\
\cR_{\mathrm{out}}(B)
&=\E_q\left\|\frac1mY^\top e_B(q)\right\|_G^2
=\frac{1}{m^2d}\trace\!\left((I_d-P_B)M_{\mathrm{out}}(I_d-P_B)\right),
\label{eq:out-risk}
\end{align}
where $M_{\mathrm{out}}=X^\top YGY^\top X\succeq0$. The output risk
measures the linearized conditional error at uniform attention. Both risks measure
the energy of a positive semidefinite matrix outside the retained
subspace, so both can be minimized by the following spectral rule.

\begin{lemma}[Optimal subspace for a quadratic risk]
\label{lem:kyfan}
Let $M\in\R^{d\times d}$ be symmetric positive semidefinite. The
minimizers of $\trace((I_d-P_B)M(I_d-P_B))$ over $B\in\St(d,r)$ are
exactly the orthonormal bases of leading $r$-dimensional invariant
subspaces of $M$.
\end{lemma}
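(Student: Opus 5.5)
The plan is to reduce the minimization to a Ky Fan maximization and then characterize its equality cases. Since $Q=I_d-P_B$ is an orthogonal projector, $Q^2=Q$, and the cyclic property of the trace gives
\[
\trace(QMQ)=\trace(MQ)=\trace(M)-\trace(B^\top MB).
\]
Minimizing the risk is therefore equivalent to maximizing $\trace(B^\top MB)$ over $B\in\St(d,r)$. Let $M=U\Lambda U^\top$ with eigenvalues $\lambda_1\ge\dots\ge\lambda_d\ge0$. The target is the Ky Fan bound $\trace(B^\top MB)\le\sum_{j\le r}\lambda_j$, together with the statement that equality holds exactly when $\operatorname{range}(B)$ is a leading invariant subspace.

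For the bound, write $\trace(B^\top MB)=\sum_j\lambda_j w_j$ with $w_j=\norm{B^\top u_j}_2^2$. The weights satisfy $0\le w_j\le1$, because $\norm{B^\top u_j}\le\norm{u_j}=1$, and $\sum_j w_j=\trace(B^\top B)=r$. Maximizing a linear functional over this polytope gives $\sum_j\lambda_jw_j\le\sum_{j\le r}\lambda_j$. I will show this directly by exchange: $\sum_{j\le r}\lambda_j-\sum_j\lambda_jw_j=\sum_{j\le r}\lambda_j(1-w_j)-\sum_{j>r}\lambda_jw_j$. Since $\sum_{j\le r}(1-w_j)=\sum_{j>r}w_j$, this difference is at least $\lambda_r\sum_{j\le r}(1-w_j)-\lambda_{r+1}\sum_{j>r}w_j=(\lambda_r-\lambda_{r+1})\sum_{j>r}w_j\ge0$. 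Any $B$ whose columns span $r$ leading eigenvectors attains the bound, so the minimum value is $\sum_{j>r}\lambda_j$.

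The main obstacle is the converse, since the eigenvalues may be repeated across the cutoff. Here "leading $r$-dimensional invariant subspace" must mean an $M$-invariant $r$-dimensional subspace on which the eigenvalues of $M$ are $\lambda_1,\dots,\lambda_r$. I will handle equality by grouping eigenvalues into three sets: $\lambda_j>\lambda_r$, $\lambda_j=\lambda_r$, and $\lambda_j<\lambda_r$.

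Equality forces every term of the exchange argument to be tight. This gives $w_j=1$ whenever $\lambda_j>\lambda_r$ and $w_j=0$ whenever $\lambda_j<\lambda_r$. To see this, rerun the exchange with thresholds at the group boundaries, so that any slack becomes multiplied by a strictly positive eigenvalue gap.

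Now $w_j=1$ means $u_j\in\operatorname{range}(B)$, and $w_j=0$ means $u_j\perp\operatorname{range}(B)$. Hence $\operatorname{range}(B)$ is the span of all eigenvectors with $\lambda_j>\lambda_r$, plus a subspace of the $\lambda_r$-eigenspace of the right dimension. Any such space is invariant and carries exactly the top $r$ eigenvalues. Conversely, every such space attains the bound, and this completes the characterization.
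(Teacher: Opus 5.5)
Your proposal is correct and follows the same route as the paper's proof: you reduce to maximizing $\trace(B^\top MB)$ via $\trace((I_d-P_B)M(I_d-P_B))=\trace(M)-\trace(B^\top MB)$, use the weights $w_j=\norm{B^\top u_j}_2^2\in[0,1]$ summing to $r$, and characterize equality by $w_j=1$ for $\lambda_j>\lambda_r$ and $w_j=0$ for $\lambda_j<\lambda_r$. Your three-group exchange argument spells out the equality step, which the paper states without detailed justification.
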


Applied to $X^\top X$, Lemma~\ref{lem:kyfan} gives page local key PCA
\citep{hwang2026lockspagelocalcompactkey}. Applied to $M_{\mathrm{out}}$,
it gives the leading right singular subspaces of $G^{1/2}Y^\top X$,
where $G^{1/2}$ is the symmetric positive semidefinite square root.
The latter weights key directions by their coupling to value differences,
rather than by key energy alone.

\subsection{Hybrid objective}
\label{sec:method-hybrid}

To combine the risks, scale each component by its total energy, with a
stabilizer $\varepsilon>0$:
\[
c_{\mathrm{key}}=(\trace(X^\top X)+\varepsilon)^{-1},
\qquad
c_{\mathrm{out}}=(\trace(M_{\mathrm{out}})+\varepsilon)^{-1}.
\]
For $\eta\in[0,1]$, define
\[
\cR_\eta(B)
=(1-\eta)c_{\mathrm{key}}\cR_{\mathrm{key}}(B)
+\eta m^2c_{\mathrm{out}}\cR_{\mathrm{out}}(B).
\]
The factor $m^2$ cancels the $1/m^2$ in the output risk. Substituting
\eqref{eq:key-risk} and \eqref{eq:out-risk} gives
$\cR_\eta(B)=d^{-1}\trace((I_d-P_B)M_\eta(I_d-P_B))$, where
\begin{equation}
M_\eta=(1-\eta)c_{\mathrm{key}}X^\top X
+\eta c_{\mathrm{out}}M_{\mathrm{out}}.
\label{eq:Meta}
\end{equation}

\begin{corollary}[Output aware local page basis]
\label{cor:hybrid}
The minimizers of $\cR_\eta(B)$ over
$B\in\St(d,r)$ are exactly the orthonormal bases of leading
$r$-dimensional invariant subspaces of $M_\eta$.
\end{corollary}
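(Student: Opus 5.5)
The plan is to reduce Corollary~\ref{cor:hybrid} to Lemma~\ref{lem:kyfan}. Two things are needed: $\cR_\eta(B)$ must be a positive multiple of $\trace((I_d-P_B)M_\eta(I_d-P_B))$, and $M_\eta$ must be symmetric positive semidefinite.

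First I will verify the identity $\cR_\eta(B)=d^{-1}\trace((I_d-P_B)M_\eta(I_d-P_B))$.
\begin{itemize}
\item Substitute \eqref{eq:key-risk} and \eqref{eq:out-risk} into the definition of $\cR_\eta$.
\item In the output term, the factor $m^2$ cancels the $1/m^2$, leaving $\eta\,c_{\mathrm{out}}\,d^{-1}\trace((I_d-P_B)M_{\mathrm{out}}(I_d-P_B))$.
\item The key term is $(1-\eta)\,c_{\mathrm{key}}\,d^{-1}\trace((I_d-P_B)X^\top X(I_d-P_B))$.
\item The map $M\mapsto \trace((I_d-P_B)M(I_d-P_B))$ is linear in $M$ for fixed $B$, so the two terms combine into the stated expression with $M_\eta$ from \eqref{eq:Meta}.
\end{itemize}
The risk formulas \eqref{eq:key-risk} and \eqref{eq:out-risk} themselves are taken as given. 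They come from $\E_q\|Aq\|^2=\trace(A^\top A)$ under Assumption~\ref{ass:query-second-moment}, together with idempotence and symmetry of $I_d-P_B$.

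Second, I will check that $M_\eta$ is symmetric positive semidefinite.
\begin{itemize}
\item $X^\top X$ is symmetric positive semidefinite.
\item $M_{\mathrm{out}}=(G^{1/2}Y^\top X)^\top(G^{1/2}Y^\top X)$ is symmetric positive semidefinite, since $G$ is symmetric positive semidefinite.
\item The weights are nonnegative: $c_{\mathrm{key}},c_{\mathrm{out}}>0$ because $\varepsilon>0$ and both traces are nonnegative, so the constants are always finite, and $1-\eta,\eta\ge 0$ for $\eta\in[0,1]$.
\end{itemize}
A nonnegative combination of positive semidefinite matrices is positive semidefinite, so $M_\eta\succeq0$.

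Finally, $\cR_\eta(B)$ equals the positive constant $d^{-1}$ times the quadratic risk of Lemma~\ref{lem:kyfan} with $M=M_\eta$. Scaling by a positive constant does not change the set of minimizers. Lemma~\ref{lem:kyfan} then states that the minimizers over $\St(d,r)$ are exactly the orthonormal bases of leading $r$-dimensional invariant subspaces of $M_\eta$.

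There is no genuine obstacle, since all the spectral content is in Lemma~\ref{lem:kyfan}. The one point needing care is the positive semidefiniteness of $M_\eta$ in the boundary cases $\eta=0$ and $\eta=1$, and when $M_{\mathrm{out}}=0$. These are covered by $\varepsilon>0$ keeping the constants finite and nonnegative, and by the lemma allowing ties among eigenvalues, which its ``leading invariant subspaces'' formulation already handles.
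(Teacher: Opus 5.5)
Your proposal is correct and follows essentially the same route as the paper. You substitute the two risk formulas so that the $m^2$ factors cancel and $\cR_\eta(B)=d^{-1}\trace((I_d-P_B)M_\eta(I_d-P_B))$, note that $M_\eta\succeq0$ as a nonnegative combination of positive semidefinite matrices, and then invoke Lemma~\ref{lem:kyfan}. The paper also records the Gram factorization $M_\eta=(A_\eta^{1/2}X)^\top(A_\eta^{1/2}X)$ and the minimum value $d^{-1}\sum_{j>r}\lambda_j$, but these go beyond what the corollary requires.
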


This follows from Lemma~\ref{lem:kyfan} because $M_\eta\succeq0$.
The endpoint $\eta=0$ recovers key PCA, while $\eta=1$ minimizes the
conditional output surrogate; intermediate values combine the two scaled
risks. At uniform within-page attention, the linearized full-cache response to
$e_B(q)$ is $p_0(q)$ times the conditional response, where $p_0(q)$ is
the page's attention probability at that reference. We omit the resulting
squared-error weight $p_0(q)^2$, which depends on the rest of the cache,
so the basis can be constructed from $(K,V)$ alone.

\subsection{Computation and summary}
\label{sec:method-computation}

For short pages ($m<d$), we reduce the eigenproblem from $d\times d$
to $m\times m$. The normalizers also use only $m\times m$ matrices,
since $\trace(X^\top X)=\trace(XX^\top)$ and
$\trace(M_{\mathrm{out}})=\trace(YGY^\top XX^\top)$.
Define
\[
A_\eta=(1-\eta)c_{\mathrm{key}}I_m
+\eta c_{\mathrm{out}}YGY^\top\succeq0,
\qquad
C=A_\eta^{1/2}X,
\]
using the symmetric positive semidefinite square root. Since
$M_\eta=C^\top C$, its positive eigendirections can be recovered from
the $m\times m$ matrix $CC^\top$. Let
$s=\min\{r,\operatorname{rank}(C)\}$ and, for $s>0$, let
$U_s\in\R^{m\times s}$ hold orthonormal eigenvectors of $CC^\top$ for
its $s$ largest positive eigenvalues, collected in $\Lambda_s$. Then
\begin{equation}
CC^\top U_s=U_s\Lambda_s,
\qquad
B_+=C^\top U_s\Lambda_s^{-1/2}.
\label{eq:small-gram}
\end{equation}
Thus $B_+^\top B_+=I_s$ and $M_\eta B_+=B_+\Lambda_s$.
Algorithm~\ref{alg:method} constructs $B_\eta$ from $B_+$, handling
$s=0$ and completing the basis when $s<r$. It uses the remaining
directions to minimize key reconstruction error without changing the
minimum hybrid risk. The stored summary is
\[
\operatorname{summary}(K,V)=(\mu,B_\eta,R_\eta),
\qquad R_\eta=XB_\eta\in\R^{m\times r}.
\]
Scoring uses \eqref{eq:approx-logits} with $B=B_\eta$ and evaluates
$XP_Bq$ as $R_\eta(B_\eta^\top q)$. At decode time, scoring one page costs
$O(d+dr+mr+m)$ operations. Values are used only during summary construction and do not enter page scoring.
\begin{algorithm}[t]
\caption{\method{}: output aware page summary}
\label{alg:method}
\begin{algorithmic}[1]
\REQUIRE
$K\in\R^{m\times d}$, $V\in\R^{m\times d_v}$, rank $r<d$,
$\eta\in[0,1]$, $G\succeq0$, $\varepsilon>0$

\STATE
$\mu\gets\tfrac1mK^\top\onev$,
$\nu\gets\tfrac1mV^\top\onev$,
$X\gets K-\onev\mu^\top$,
$Y\gets V-\onev\nu^\top$

\STATE
$T\gets XX^\top$, $W\gets YGY^\top$,
$c_k\gets(\trace(T)+\varepsilon)^{-1}$,
$c_o\gets(\trace(WT)+\varepsilon)^{-1}$

\STATE
$A_\eta\gets(1-\eta)c_k I_m+\eta c_o W$,
$C\gets A_\eta^{1/2}X$,
$s\gets\min\{r,\operatorname{rank}(C)\}$

\STATE $B_\eta\gets[\,]$
\IF{$s>0$}
    \STATE compute the $s$ leading positive eigenpairs
    $(U_s,\Lambda_s)$ of $CC^\top$
    \STATE $B_\eta\gets C^\top U_s\Lambda_s^{-1/2}$
\ENDIF

\IF{$s<r$}
    \STATE
    $X_\perp\gets X-(XB_\eta)B_\eta^\top$;
    append up to $r-s$ leading right singular directions of $X_\perp$
    \STATE fill remaining columns with orthonormal directions in
    $\operatorname{span}(B_\eta)^\perp$
\ENDIF

\STATE $R_\eta\gets XB_\eta$
\ENSURE $\operatorname{summary}(K,V)=(\mu,B_\eta,R_\eta)$
\end{algorithmic}
\end{algorithm}

\section{Experiments}
\label{sec}

We evaluate four questions. First, we test whether the proposed output aware
page representation improves sparse KV retrieval across long reasoning, long
context understanding, and long generation tasks
(Sections~\ref{sec:exp-reasoning} and~\ref{sec:exp-longcontext}). Unless stated otherwise, we use rank $r=8$, set $G=W_O^\top W_O$, and report $\eta^\star$, the best value in the evaluated grid at a fixed sparse KV budget. We then study selecting $\eta$ from
a held out calibration set (Section~\ref{sec:exp-eta-selection}), sensitivity
to $\eta$ (Section~\ref{sec:exp-eta-ablation}), and end to end efficiency
(Section~\ref{sec:exp-efficiency}). Implementation and reproducibility details
are given in Appendix~\ref{app:repro}, with additional rank and page budget ablations in Appendix~\ref{app:rank-budget-ablation}.

\subsection{Long reasoning Tasks}
\label{sec:exp-reasoning}

\noindent\textbf{Setup.} We evaluate MATH500\citep{hendrycksmath2021}, AIME24\citep{zhang2024accessing, zhang2024llama}, and GPQA\citep{rein2024gpqa} on DeepSeek R1 Llama 8B, DeepSeek R1 Qwen 7B, and DeepSeek R1 Qwen 14B \citep{Guo2025,qwen2.5,qwen2025qwen25technicalreport,yang2024qwen25mathtechnicalreportmathematical,grattafiori2024llama3herdmodels}, using $k=8$ generations per problem. We report pass@$k$ and avg@$k$, using avg@$k$ as the primary metric because it captures correctness across all sampled reasoning trajectories.

\noindent\textbf{Results.}
Table~\ref{tab:reasoning-main} shows the largest gains on avg@$k$. \method{} is the best compressed method in all nine model and benchmark settings, tying FreeKV only on Qwen 14B GPQA. Averaged across tasks, avg@$k$ improves over FreeKV from $51.83$ to $55.69$ on Llama 8B, $54.14$ to $57.83$ on Qwen 7B, and $62.56$ to $66.42$ on Qwen 14B. The largest gains occur on AIME24, reaching $5.8$ to $7.1$ points across models. Pass@$k$ is more mixed, with \method{} slightly below FreeKV on Llama 8B and Qwen 7B but above it on Qwen 14B. Overall, the gains primarily reflect higher accuracy across sampled trajectories rather than only a higher chance of obtaining one correct solution.

\begin{table}[t]
\vspace{-0.15in}
\caption{
Long reasoning performance with $k=8$ generations per problem.
$\eta^\star$ denotes the best evaluated mixing parameter for each setting.
Each Average row averages MATH500, AIME24, and GPQA for that model.
Gold and silver denote the best and second best compressed methods;
FullKV is excluded from ranking.
\vspace{-0.15in}}
\label{tab:reasoning-main}
\begin{center}
\setlength{\tabcolsep}{3.2pt}
\resizebox{\linewidth}{!}{%
\begin{tabular}{ll|c|ccccccc|c}
\toprule
\textbf{Benchmark}
& \textbf{Metric}
& Full
& Razor
& RaaS
& Quest
& ArkVale
& ShadowKV
& InfiniGen
& FreeKV
& $\boldsymbol{\eta^\star}$ \\
\midrule

\multicolumn{11}{c}{\emph{DeepSeek R1 Llama 8B}} \\
\midrule

\multirow{2}{*}{MATH500}
& pass@$k$
& 78.00 & 72.00 & 74.00 & 72.00 & 72.00 & 76.00 & 74.00
& \second{78.00} & \best{80.00} \\
& avg@$k$
& 67.25 & 60.50 & 62.50 & 62.00 & 62.50 & 60.25 & 66.50
& \second{66.75} & \best{69.75} \\

\midrule
\multirow{2}{*}{AIME24}
& pass@$k$
& 80.00 & 46.67 & 66.67 & 73.33 & \best{80.00} & 63.33 & 70.00
& \second{76.67} & \second{76.67} \\
& avg@$k$
& 47.08 & 30.00 & 36.25 & 44.17 & 46.67 & 36.50 & 45.83
& \second{47.50} & \best{53.33} \\

\midrule
\multirow{2}{*}{GPQA}
& pass@$k$
& 82.00 & 60.00 & 64.00 & 76.00 & 72.00 & \second{78.00} & 46.00
& \best{86.00} & 74.00 \\
& avg@$k$
& 39.75 & 34.25 & 33.50 & 37.25 & 39.75 & 36.25 & 27.50
& \second{41.25} & \best{44.00} \\

\midrule
\multirow{2}{*}{\textbf{Average}}
& pass@$k$
& 80.00 & 59.56 & 68.22 & 73.78 & 74.67 & 72.44 & 63.33
& \best{80.22} & \second{76.89} \\
& avg@$k$
& 51.36 & 41.58 & 44.08 & 47.81 & 49.64 & 44.33 & 46.61
& \second{51.83} & \best{55.69} \\

\midrule
\multicolumn{11}{c}{\emph{DeepSeek R1 Qwen 7B}} \\
\midrule

\multirow{2}{*}{MATH500}
& pass@$k$
& 78.00 & 72.00 & 74.00 & 76.00 & 76.00 & 74.00 & 74.00
& \second{78.00} & \best{80.00} \\
& avg@$k$
& 71.75 & 66.75 & 67.00 & 68.00 & 68.25 & 64.75 & \second{70.00}
& \second{70.00} & \best{73.75} \\

\midrule
\multirow{2}{*}{AIME24}
& pass@$k$
& 83.33 & 65.33 & 73.33 & 76.67 & 73.33 & 73.33 & 63.33
& \best{83.33} & \second{80.00} \\
& avg@$k$
& 56.66 & 35.42 & 42.92 & 47.50 & 47.92 & 43.75 & 43.34
& \second{52.92} & \best{60.00} \\

\midrule
\multirow{2}{*}{GPQA}
& pass@$k$
& 72.00 & 60.00 & 58.00 & \second{72.00} & \second{72.00}
& 70.00 & 58.00 & \best{74.00} & \second{72.00} \\
& avg@$k$
& 35.75 & 32.50 & 33.25 & 38.75 & 34.25 & 33.50 & 35.50
& \second{39.50} & \best{39.75} \\

\midrule
\multirow{2}{*}{\textbf{Average}}
& pass@$k$
& 77.78 & 65.78 & 68.44 & 74.89 & 73.78 & 72.44 & 65.11
& \best{78.44} & \second{77.33} \\
& avg@$k$
& 54.72 & 44.89 & 47.72 & 51.42 & 50.14 & 47.33 & 49.61
& \second{54.14} & \best{57.83} \\

\midrule
\multicolumn{11}{c}{\emph{DeepSeek R1 Qwen 14B}} \\
\midrule

\multirow{2}{*}{MATH500}
& pass@$k$
& 74.00 & 70.00 & 68.00 & 76.00 & 72.00 & 76.00 & 72.00
& \second{78.00} & \best{80.00} \\
& avg@$k$
& 70.25 & 59.75 & 64.75 & 67.25 & 66.25 & 65.00 & 67.00
& \second{67.50} & \best{72.00} \\

\midrule
\multirow{2}{*}{AIME24}
& pass@$k$
& 86.67 & 46.67 & 73.33 & \second{83.33} & 76.67
& \second{83.33} & 76.67 & \second{83.33} & \best{86.67} \\
& avg@$k$
& 66.25 & 32.50 & 48.75 & 58.33 & 61.25 & 57.25 & 60.00
& \second{64.17} & \best{71.25} \\

\midrule
\multirow{2}{*}{GPQA}
& pass@$k$
& 82.00 & 68.00 & 80.00 & 80.00 & \best{86.00}
& \best{86.00} & 58.00 & \best{86.00} & \best{86.00} \\
& avg@$k$
& 53.25 & 38.50 & 44.25 & 51.25 & \second{53.75}
& 51.75 & 38.00 & \best{56.00} & \best{56.00} \\

\midrule
\multirow{2}{*}{\textbf{Average}}
& pass@$k$
& 80.89 & 61.56 & 73.78 & 79.78 & 78.22 & 81.78 & 68.89
& \second{82.44} & \best{84.22} \\
& avg@$k$
& 63.25 & 43.58 & 52.58 & 58.94 & 60.42 & 58.00 & 55.00
& \second{62.56} & \best{66.42} \\

\bottomrule
\end{tabular}%
}
\vspace{-0.3in}
\end{center}
\end{table}

\vspace{-0.1in}
\subsection{Long context understanding and generation}
\label{sec:exp-longcontext}
\vspace{-0.05in}
\noindent\textbf{Setup.}
We next evaluate the representation under long input and long generation
workloads. Following the baseline protocol, LongBench~v2\citep{bai-etal-2025-longbench} inputs are
truncated to $64$K tokens and evaluated using Overall accuracy together
with Short, Medium, and Long context buckets. LongGenBench\citep{ICLR2025_141304a3} stresses the
cache during $16$K token generations and reports completion rate (CR)
and CR$\times$Acc. For both benchmarks, we evaluate Llama 3.1 8B Instruct
and Qwen 2.5 7B and 14B Instruct
\citep{grattafiori2024llama3herdmodels,qwen2.5,qwen2025qwen25technicalreport}
with a KV budget of $2048$.

\noindent\textbf{Results.}
On LongBench~v2, Table~\ref{tab:lbv2-lgb} shows that \method{} achieves
the highest Overall score among compressed methods on Llama 8B and
Qwen 7B, reaching $29.82$ and $28.03$, respectively, and exceeding
FullKV on both. On Qwen 14B, it reaches $33.00$, below the strongest
compressed baseline at $34.79$. Across the context length buckets,
\method{} remains competitive at longer inputs, obtaining the best or
second best Long score for all three models. LongGenBench presents a less uniform picture. \method{} obtains the
highest CR on Qwen 7B and remains close to the strongest methods on
Qwen 14B, while Llama 8B favors other retrieval methods. CR$\times$Acc
follows a similarly model dependent pattern. Thus the strongest transfer
beyond reasoning appears on long input understanding, while long
generation is more sensitive to the model and workload. Figure~\ref{fig:acc-vs-latency}(b) and Appendix~\ref{app:infbench-results} additionally report InfiniteBench LongQA accuracy at contexts exceeding $100$K tokens.

\begin{table}[t]
\vspace{-0.2in}
\caption{
LongBench~v2 accuracy by context length and LongGenBench generation
metrics. LongBench~v2 Overall is the task aggregate for each model. 
\vspace{-0.15in}}
\label{tab:lbv2-lgb}
\begin{center}
\setlength{\tabcolsep}{3.2pt}
\resizebox{\linewidth}{!}{%
\begin{tabular}{ll|c|ccccccc|c}
\toprule
\textbf{Benchmark}
& \textbf{Metric}
& Full
& Razor
& RaaS
& Quest
& ArkVale
& ShadowKV
& InfiniGen
& FreeKV
& $\boldsymbol{\eta^\star}$ \\
\midrule

\multicolumn{11}{c}{\emph{Llama 3.1 8B Instruct}} \\
\midrule

\multirow{4}{*}{LongBench v2}
& Overall
& 29.22 & 27.44 & 28.23 & 28.43 & 28.63 & 25.45 & 28.56
& \second{29.22} & \best{29.82} \\
& Short
& 34.44 & \second{33.89} & \second{33.89} & 33.33
& \second{33.89} & 32.78 & 32.28 & \best{35.00} & \best{35.00} \\
& Medium
& 27.91 & 25.12 & 26.51 & \second{27.44} & 26.98 & 22.79
& 26.05 & \second{27.44} & \best{28.84} \\
& Long
& 23.15 & 21.30 & 22.02 & 22.22 & 23.15 & 18.52
& \second{24.07} & 23.15 & \best{25.93} \\

\midrule
\multirow{2}{*}{LongGenBench}
& CR
& 80.03 & 35.90 & 76.63 & \second{78.03} & 39.36
& \best{79.28} & 76.68 & \second{78.03} & 74.71 \\
& CR$\times$Acc
& 26.82 & 12.20 & 26.00 & 27.71 & 10.36
& \best{30.66} & 26.21 & 27.62 & \second{28.50} \\

\midrule
\multicolumn{11}{c}{\emph{Qwen 2.5 7B Instruct}} \\
\midrule

\multirow{4}{*}{LongBench v2}
& Overall
& 27.44 & 25.25 & 26.24 & \second{27.63} & 26.84 & 25.84
& 26.44 & 26.84 & \best{28.03} \\
& Short
& 36.11 & 32.78 & 35.56 & \best{36.67} & \second{36.11}
& 32.22 & 32.22 & 34.44 & \second{36.11} \\
& Medium
& 23.72 & 21.86 & 21.86 & \second{22.79} & 22.33 & 20.00
& \best{23.26} & 22.33 & \second{22.79} \\
& Long
& 20.37 & 19.44 & 19.44 & 22.22 & 20.37 & \best{26.85}
& 23.15 & 23.15 & \second{25.93} \\

\midrule
\multirow{2}{*}{LongGenBench}
& CR
& 79.56 & 42.13 & \second{77.65} & 62.89 & 75.91 & 35.49
& 72.96 & 76.93 & \best{78.54} \\
& CR$\times$Acc
& 31.09 & 21.48 & \best{34.40} & 25.96 & 31.79
& 11.43 & 27.67 & \second{32.81} & 32.38 \\

\midrule
\multicolumn{11}{c}{\emph{Qwen 2.5 14B Instruct}} \\
\midrule

\multirow{4}{*}{LongBench v2}
& Overall
& 33.40 & \second{34.19} & 32.60 & 33.80
& \second{34.19} & \best{34.79} & 32.31 & \second{34.19} & 33.00 \\
& Short
& 41.11 & \best{43.33} & 40.56 & 40.00
& \second{41.11} & 40.56 & 40.56 & \second{41.11} & \second{41.11} \\
& Medium
& 31.16 & 30.70 & 32.09 & \second{33.49}
& \second{33.49} & \best{34.88} & 29.89 & \second{33.49} & 30.23 \\
& Long
& 25.00 & \best{25.93} & 20.37 & 24.07 & 24.07
& \second{25.00} & 23.95 & 24.07 & \best{25.93} \\

\midrule
\multirow{2}{*}{LongGenBench}
& CR
& 65.84 & 26.48 & 62.29 & 45.49 & 45.31 & 21.25
& 63.02 & \best{65.46} & \second{63.92} \\
& CR$\times$Acc
& 29.35 & 13.89 & \best{29.79} & 19.76 & 19.65
& 8.25 & 26.08 & \second{29.39} & 28.64 \\

\bottomrule
\end{tabular}%
}
\end{center}
\vspace{-0.1in}
\end{table}






\vspace{-0.05in}
\subsection{Selecting $\eta$ in practice}
\label{sec:exp-eta-selection}
\vspace{-0.05in}

The results above report $\eta^\star$, the best value in the evaluated grid
for each task. Because this value is selected using evaluation scores, these
results characterize the attainable performance of the representation rather
than a practical parameter selection procedure.

We therefore select one $\hat{\eta}$ per model using a disjoint calibration
pool of $40$ problems, with $20$ from MATH500 and $20$ from GPQA. We evaluate
$\eta\in\{0,0.25,0.5,0.75,1\}$ on this pool, select the value with the highest
mean calibration performance, and freeze it across MATH500, AIME24, and GPQA.
Calibration problems are excluded from all scores in
Table~\ref{tab:eta-calibration}; AIME24 contains no calibration problems, so
its evaluation set is unchanged. On these matched held out examples, frozen
OVAL averages $53.69$ versus $50.42$ for FreeKV on Llama 8B, improving on all
three tasks, including AIME24, which was absent from the calibration pool.
On Qwen 7B, calibration selects $\hat{\eta}=0.25$ and frozen OVAL averages
$54.28$ versus $54.72$ for FreeKV: it improves AIME24 slightly but trails on
MATH500 and GPQA. Task specific sweeping reaches $55.75$ and $57.78$, leaving
gaps of $2.05$ and $3.50$ points. Calibration therefore transfers well on
Llama 8B, while on Qwen 7B the 40-problem pool does not identify the best
setting; more robust selection of $\eta$ is an important direction for future
work.

\begin{table}[t]
\caption{
Calibration based selection of $\eta$ for long reasoning.
$\hat{\eta}$ is selected per model on a disjoint calibration pool and frozen
across tasks. All methods are scored on the same held out examples. AIME24
contains no calibration problems and is therefore identical to the evaluation
set in Table~\ref{tab:reasoning-main}. Gap is Sweep best minus Frozen.
}
\label{tab:eta-calibration}
\centering
\small
\setlength{\tabcolsep}{3.2pt}
\renewcommand{\arraystretch}{1.04}

\begin{tabular}{llccccc}
\toprule
Model & Task & $\hat{\eta}$ & FreeKV & Frozen & Sweep best & Gap \\
\midrule

\multirow{4}{*}{Llama 8B}
& MATH500 & \multirow{4}{*}{$0.50$} & 63.25 & 66.00 & 69.74 & 3.74 \\
& AIME24  &                         & 47.50 & 53.33 & 53.33 & 0.00 \\
& GPQA    &                         & 40.50 & 41.75 & 44.17 & 2.42 \\
\cmidrule(lr){2-7}
& \textbf{Avg.} & & \textbf{50.42}
& \textbf{53.69} & \textbf{55.75} & \textbf{2.05} \\

\midrule

\multirow{4}{*}{Qwen 7B}
& MATH500 & \multirow{4}{*}{$0.25$} & 72.50 & 72.00 & 73.75 & 1.75 \\
& AIME24  &                         & 52.92 & 53.33 & 60.00 & 6.67 \\
& GPQA    &                         & 38.75 & 37.50 & 39.58 & 2.08 \\
\cmidrule(lr){2-7}
& \textbf{Avg.} & & \textbf{54.72}
& \textbf{54.28} & \textbf{57.78} & \textbf{3.50} \\

\bottomrule
\end{tabular}
\vspace{-0.12in}
\end{table}
\vspace{-0.1in}
\subsection{Sensitivity to $\eta$}
\label{sec:exp-eta-ablation}
\vspace{-0.05in}

We next examine sensitivity to $\eta$, with $\eta=0$ recovering page local
key PCA and larger values increasing output sensitivity.
Table~\ref{tab:eta-ablation} shows no universally optimal setting:
long reasoning generally benefits from nonzero $\eta$, LongBench~v2
\citep{bai-etal-2025-longbench} is relatively stable across interior values, and LongGenBench \citep{ICLR2025_141304a3} is more model dependent.

\begin{table*}[t]
\vspace{-0.15in}
\caption{
Effect of $\eta$ across long reasoning, LongBench v2 (LBv2), and
LongGenBench (LGB). Long reasoning results are avg@$k$.
}

\label{tab:eta-ablation}
\centering
\small
\setlength{\tabcolsep}{3.6pt}
\renewcommand{\arraystretch}{0.95}

\begin{tabular}{c|ccc|cccc|cc}
\toprule
& \multicolumn{3}{c|}{Long reasoning}
& \multicolumn{4}{c|}{LongBench v2}
& \multicolumn{2}{c}{LongGenBench} \\
\cmidrule(lr){2-4}
\cmidrule(lr){5-8}
\cmidrule(lr){9-10}
$\boldsymbol{\eta}$
& MATH500 & AIME24 & GPQA
& Overall & Short & Medium & Long
& CR & CR$\times$Acc \\
\midrule

\multicolumn{10}{c}{\emph{Llama 8B}} \\
\midrule
0
& 68.00 & 42.08 & 39.00
& 29.42 & 34.44 & 27.44 & 25.00
& 73.12 & 27.24 \\

0.25
& \textbf{69.75} & 47.08 & 39.00
& 29.42 & \textbf{35.00} & 26.98 & 25.00
& 72.64 & 27.26 \\

0.5
& 62.25 & \textbf{53.33} & 40.25
& \textbf{29.82} & \textbf{35.00} & 27.44 & \textbf{25.93}
& 71.82 & 27.23 \\

0.75
& 65.75 & 33.75 & \textbf{44.00}
& \textbf{29.82} & 33.89 & \textbf{28.84} & 25.00
& 72.61 & 27.28 \\

1
& 63.75 & 33.75 & 39.00
& \textbf{29.82} & \textbf{35.00} & 27.91 & 25.00
& \textbf{74.71} & \textbf{28.50} \\

\midrule
\multicolumn{10}{c}{\emph{Qwen 7B}} \\
\midrule
0
& 70.00 & 53.33 & 30.00
& 27.63 & 35.56 & 21.86 & \textbf{25.93}
& \textbf{78.54} & \textbf{32.38} \\

0.25
& \textbf{73.75} & 53.33 & \textbf{39.75}
& \textbf{28.03} & 35.56 & \textbf{22.79} & \textbf{25.93}
& 77.64 & 31.91 \\

0.5
& 70.25 & \textbf{60.00} & 36.25
& 27.63 & 35.56 & 21.86 & \textbf{25.93}
& 77.93 & 31.86 \\

0.75
& 71.75 & 50.42 & 37.75
& \textbf{28.03} & \textbf{36.11} & 22.33 & \textbf{25.93}
& 78.29 & 31.57 \\

1
& 72.25 & \textbf{60.00} & 36.75
& \textbf{28.03} & \textbf{36.11} & 22.33 & \textbf{25.93}
& 78.00 & 32.28 \\

\midrule
\multicolumn{10}{c}{\emph{Qwen 14B}} \\
\midrule
0
& 71.75 & 63.75 & 51.75
& 32.41 & 40.56 & 29.30 & 25.00
& 63.78 & 28.61 \\

0.25
& 68.00 & \textbf{71.25} & 48.75
& 32.21 & 40.56 & 28.84 & 25.00
& 63.74 & 27.68 \\

0.5
& \textbf{72.00} & 66.25 & \textbf{56.00}
& 32.60 & \textbf{41.11} & 28.84 & \textbf{25.93}
& 61.08 & 26.94 \\

0.75
& 71.50 & 60.42 & 50.25
& \textbf{33.00} & \textbf{41.11} & \textbf{30.23} & 25.00
& \textbf{63.92} & \textbf{28.64} \\

1
& 71.00 & 66.25 & 48.25
& 32.60 & \textbf{41.11} & 29.77 & 24.07
& 61.34 & 27.30 \\

\bottomrule
\end{tabular}
\end{table*}

\subsection{Efficiency}
\label{sec:exp-efficiency}

\noindent\textbf{Accuracy Runtime Tradeoff.}
Figure~\ref{fig:acc-vs-latency}(a) shows that \method{} incurs about $10\%$ decode overhead over FreeKV on LongBench~v2 while improving accuracy most on long inputs. Panel~(b) extends the comparison to RaaS and ArkVale on InfiniteBench LongQA, where \method{} achieves higher accuracy at modest latency and is substantially faster than ArkVale.

\begin{figure}[t!]
    \centering
    \input{Figures/acc_vs_latency}
    \vspace{-0.2in}
    \caption{Efficiency of sparse retrieval.
(a) Accuracy gain versus decode overhead relative to FreeKV on
Qwen2.5-14B LongBench~v2.
(b) Broader accuracy runtime comparison on InfiniteBench LongQA with
Llama-3.1-8B, including RaaS and ArkVale.\vspace{-0.12in}}
    \label{fig:acc-vs-latency}
\end{figure}

\vspace{-0.1in}
\section{Conclusion}
\vspace{-0.1in}

We introduced \method{}, an output aware page representation for sparse KV
retrieval that combines key fidelity with value dependent output sensitivity.
It preserves the summary format and decode time scoring complexity of key only
spectral retrieval while better aligning retrieval with the attention output. Across long reasoning, long context understanding, and long generation,
\method{} improves over the key only endpoint and performs strongly against
recent sparse KV baselines with modest decoding overhead. We further show that $\eta$ can be selected from held out calibration data,
which transfers well on Llama 8B and leaves substantial headroom on Qwen 7B. Overall, the results support incorporating output sensitivity
into compact retrieval representations and designing sparse attention summaries around downstream approximation error rather than key geometry alone.

\subsection*{Acknowledgments}

This work was supported in part by the National Center on Generative AI for Uplifting STEM+C Education (GENIUS Center) and by the NSF CAREER Award \#233989. The authors also gratefully acknowledge the computing resources provided by the NVIDIA Academic Grant Program and Lambda Cloud

\subsection*{Reproducibility statement}

We provide implementation details, model checkpoints, benchmark protocols, hyperparameters, evaluation settings, memory and latency configurations, and corrections to the reference harness in Appendix~\ref{app:repro}. Code, patches, and per experiment scripts are released through the anonymized repository linked there. All reported comparisons use matched effective KV budgets, and the appendix documents deviations from upstream defaults and implementation details required to reproduce the reported results.

\bibliographystyle{unsrtnat}
\bibliography{arxiv/references}

\newpage
\appendix
\section{Proofs and extensions}
\label{app:proofs}

Appendix~\ref{app:proofs-prelim} collects the softmax and spectral facts
used throughout. Appendix~\ref{app:proofs-sensitivity} proves the output
sensitivity results in \Secref{sec:method-setup}.
Appendix~\ref{app:proofs-basis} proves the first order output expansion
and gives additional results separating key fidelity from output
sensitivity. Appendix~\ref{app:proofs-hybrid} derives the page mass
surrogate and proves Corollary~\ref{cor:hybrid}.
Appendix~\ref{app:proofs-extensions} gives extensions beyond the isotropic
query and uniform attention assumptions.

\subsection{Preliminaries}
\label{app:proofs-prelim}

Throughout, $\beta(z)=\softmax(z)$ and $f(z)=V^\top\beta(z)$.
For a probability vector $b\in\R^m$, define
$S(b)=\Diag(b)-bb^\top$, where $\Diag(b)$ places $b$ on the diagonal.
Let $H=I_m-\tfrac1m\onev\onev^\top$, so $X=HK$ and $Y=HV$.
We write $Df(z)[h]=\left.\frac{d}{dt}f(z+th)\right|_{t=0}$ for the
directional derivative. All functions below are smooth on $\R^m$.

\begin{lemma}[Softmax calculus]
\label{lem:softmax-facts}
For every $z,h\in\R^m$:
\begin{enumerate}
    \item $D\beta(z)[h]=S(\beta(z))h$;
    \item $S(\beta(z))\onev=0$ and $S(\beta(z))\succeq0$;
    \item $D\operatorname{lse}(z)=\beta(z)^\top$ and
    $D^2\operatorname{lse}(z)=S(\beta(z))$.
\end{enumerate}
In particular,
$D\operatorname{lse}(0)=\onev^\top/m$ and
$D^2\operatorname{lse}(0)=H/m$.
\end{lemma}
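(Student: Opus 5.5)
The plan is direct coordinate computation, followed by the specialization at $z=0$. Write $b=\beta(z)$, so $b_i=e^{z_i}/\sum_j e^{z_j}$. For item 1, differentiate componentwise:
\[
\partial b_i/\partial z_j=b_i\delta_{ij}-b_ib_j.
\]
This is exactly the Jacobian $\Diag(b)-bb^\top=S(b)$, and applying it to $h$ gives $D\beta(z)[h]=S(\beta(z))h$.

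For item 2, use $\onev^\top b=1$:
\[
S(b)\onev=b-b(b^\top\onev)=0.
\]
For positive semidefiniteness, take any $x\in\R^m$ and expand
\[
x^\top S(b)x=\sum_i b_ix_i^2-\Bigl(\sum_i b_ix_i\Bigr)^2 .
\]
This is the variance of $x_I$ when the index $I$ is drawn with probability $b$, so it is nonnegative. Equivalently, it is Jensen's (or Cauchy--Schwarz's) inequality with weights $b$.

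For item 3, compute the partial derivatives of the log-sum-exp:
\[
\partial_i\operatorname{lse}(z)=e^{z_i}\Big/\sum_j e^{z_j}=b_i,
\]
so $D\operatorname{lse}(z)=\beta(z)^\top$. The Hessian is the Jacobian of $\beta$, which item 1 identifies as $S(\beta(z))$.

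For the specialization, note $\beta(0)=\onev/m$. Then
\[
S(\onev/m)=\tfrac1m I_m-\tfrac1{m^2}\onev\onev^\top=H/m,
\]
and $D\operatorname{lse}(0)=\onev^\top/m$.

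No step is a real obstacle. The only point needing care is the positive semidefiniteness in item 2, and the variance identity settles it. Smoothness of $\beta$ and $\operatorname{lse}$ on $\R^m$ is immediate, since $\sum_j e^{z_j}>0$.
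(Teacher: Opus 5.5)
Your proposal is correct and follows the paper's proof essentially step for step. Both use the componentwise Jacobian $\beta_i(\delta_{ij}-\beta_j)$, the identity $S(\beta)\onev=\beta-\beta(\onev^\top\beta)=0$, the variance interpretation of $x^\top S(\beta)x$, the gradient $\nabla\operatorname{lse}=\beta$ yielding the Hessian $S(\beta)$, and the evaluation $S(\onev/m)=H/m$ at $z=0$.
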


\begin{proof}
For
$\beta_i(z)=e^{z_i}/\sum_j e^{z_j}$,
\[
\frac{\partial\beta_i}{\partial z_j}
=
\beta_i(\delta_{ij}-\beta_j),
\]
which gives the first claim.

For the second,
\[
S(\beta)\onev
=
\beta-\beta(\onev^\top\beta)
=
0.
\]
Moreover, for any $x\in\R^m$,
\[
x^\top S(\beta)x
=
\sum_i\beta_i x_i^2
-
\left(\sum_i\beta_i x_i\right)^2
=
\operatorname{Var}_{i\sim\beta}(x_i)
\geq0.
\]

Finally,
$\nabla\operatorname{lse}(z)=\beta(z)$, so
$D^2\operatorname{lse}(z)=S(\beta(z))$.
At $z=0$, $\beta(0)=\onev/m$, hence
$S(\onev/m)=H/m$.
\end{proof}

\begin{proof}[Proof of Lemma~\ref{lem:kyfan}]
Let $\lambda_1\geq\cdots\geq\lambda_d\geq0$ be the eigenvalues of $M$.
Since $I_d-P_B$ is an orthogonal projector,
\[
\trace\!\left(
(I_d-P_B)M(I_d-P_B)
\right)
=
\trace(M)-\trace(B^\top MB).
\]
Thus minimizing the residual is equivalent to maximizing
$\trace(B^\top MB)$.

Write
$M=\sum_j\lambda_j u_ju_j^\top$ and define
$c_j=\norm{B^\top u_j}_2^2$.
Then $0\leq c_j\leq1$ and $\sum_jc_j=r$, so
\[
\trace(B^\top MB)
=
\sum_j\lambda_jc_j
\leq
\sum_{j\leq r}\lambda_j.
\]
Equality is attained by $B=[u_1,\ldots,u_r]$, giving minimum residual
$\sum_{j>r}\lambda_j$.

Equality holds exactly when $c_j=1$ for $\lambda_j>\lambda_r$ and
$c_j=0$ for $\lambda_j<\lambda_r$. Thus the retained subspace contains
all eigendirections above $\lambda_r$, with its remaining dimensions in
the $\lambda_r$ eigenspace. These are precisely the leading invariant
subspaces; the subspace is unique when $\lambda_r>\lambda_{r+1}$.

Finally, if $M=C^\top C$, its positive eigenvectors are the right singular
vectors of $C$.
\end{proof}

\subsection{Output sensitivity}
\label{app:proofs-sensitivity}

\begin{lemma}[Conditional output sensitivity]
\label{lem:jacobian}
For any logits $z\in\R^m$ and perturbation $h\in\R^m$,
\begin{equation}
Df(z)[h]
=
V^\top S(\beta(z))h
=
\sum_{i=1}^m \beta_i(z)\bigl(v_i-f(z)\bigr)h_i .
\label{eq:jacobian}
\end{equation}
Moreover, $V^\top S(\beta)=Y^\top S(\beta)$, since $S(\beta)$ is
symmetric and $S(\beta)\onev=0$.
\end{lemma}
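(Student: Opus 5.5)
The plan is to differentiate $f(z)=V^\top\beta(z)$ by the chain rule, invoke Lemma~\ref{lem:softmax-facts}, and then expand the resulting matrix product coordinatewise. The map $h\mapsto V^\top h$ is linear, so
\[
Df(z)[h]=V^\top D\beta(z)[h]=V^\top S(\beta(z))h,
\]
by part~1 of Lemma~\ref{lem:softmax-facts}. This gives the first equality in \eqref{eq:jacobian}.

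For the second equality, write $\beta=\beta(z)$ and expand $S(\beta)h=\Diag(\beta)h-\beta(\beta^\top h)$. Its $i$th coordinate is $\beta_i h_i-\beta_i\sum_j\beta_j h_j$. Multiplying by $V^\top$ gives
\[
V^\top S(\beta)h=\sum_i\beta_i h_i v_i-\Bigl(\sum_i\beta_i v_i\Bigr)\Bigl(\sum_j\beta_j h_j\Bigr).
\]
Since $f(z)=\sum_i\beta_i v_i$, the second term equals $\sum_j\beta_j h_j f(z)$. Combining the two sums yields $\sum_i\beta_i(v_i-f(z))h_i$.

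For the final claim, use $Y=V-\onev\nu^\top$, so that $Y^\top=V^\top-\nu\onev^\top$. Then
\[
Y^\top S(\beta)=V^\top S(\beta)-\nu\,\onev^\top S(\beta).
\]
By symmetry of $S(\beta)$, $\onev^\top S(\beta)=(S(\beta)\onev)^\top$, which vanishes by part~2 of Lemma~\ref{lem:softmax-facts}. Hence $Y^\top S(\beta)=V^\top S(\beta)$.

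None of these steps is a real obstacle. The only place needing care is the bookkeeping in the coordinate expansion, specifically recognizing the cross term as $f(z)$ times the $\beta$-weighted average of $h$. Everything else follows directly from the softmax calculus lemma.
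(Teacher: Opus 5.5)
Your proposal is correct and follows the paper's proof essentially step for step. Both proofs use the chain rule with part~1 of Lemma~\ref{lem:softmax-facts}, expand $V^\top\Diag(\beta)h - V^\top\beta\,\beta^\top h$ coordinatewise, and then remove the mean term via $V=\onev\nu^\top+Y$ together with $\onev^\top S(\beta)=(S(\beta)\onev)^\top=0$.
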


\begin{proof}[Proof of Lemma~\ref{lem:jacobian}]
By Lemma~\ref{lem:softmax-facts},
\[
Df(z)[h]
=
V^\top D\beta(z)[h]
=
V^\top S(\beta(z))h.
\]
Writing $\beta=\beta(z)$,
\[
\begin{aligned}
V^\top S(\beta)h
&=
V^\top\Diag(\beta)h
-
V^\top\beta\,\beta^\top h\\
&=
\sum_{i=1}^m
\beta_i v_i h_i
-
f(z)\sum_{i=1}^m\beta_i h_i\\
&=
\sum_{i=1}^m
\beta_i
\bigl(v_i-f(z)\bigr)h_i .
\end{aligned}
\]

Since $S(\beta)\onev=0$ and $S(\beta)$ is symmetric,
$\onev^\top S(\beta)=0$.
Using $V=\onev\nu^\top+Y$,
\[
V^\top S(\beta)
=
\nu\,\onev^\top S(\beta)
+
Y^\top S(\beta)
=
Y^\top S(\beta).
\]
\end{proof}

\begin{proposition}[Exact path identity]
\label{prop:path}
Let
$z,\widehat z\in\R^m$,
$e=z-\widehat z$,
$z_t=\widehat z+te$, and
$\beta_t=\softmax(z_t)$.
Then
\begin{equation}
f(z)-f(\widehat z)
=
\left(
\int_0^1
V^\top S(\beta_t)\,dt
\right)e,
\label{eq:path}
\end{equation}
and
\[
\norm{f(z)-f(\widehat z)}_G^2
\leq
\int_0^1
e^\top
S(\beta_t)VGV^\top S(\beta_t)e
\,dt .
\]
\end{proposition}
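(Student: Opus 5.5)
The plan is to apply the fundamental theorem of calculus along the straight-line path $z_t=\widehat z+te$, $t\in[0,1]$, and then bound the $G$-seminorm of the resulting integral by Jensen's inequality.

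\textbf{Step 1 (the identity).} Set $\phi(t)=f(z_t)$. Since $f$ is smooth on $\R^m$, $\phi$ is continuously differentiable on $[0,1]$. By the chain rule and Lemma~\ref{lem:jacobian},
\[
\phi'(t)=Df(z_t)[e]=V^\top S(\beta_t)e .
\]
Because $z_1=z$ and $z_0=\widehat z$, integrating gives
\[
f(z)-f(\widehat z)=\int_0^1 V^\top S(\beta_t)e\,dt .
\]
The vector $e$ does not depend on $t$, so it can be pulled out of the integral, which yields \eqref{eq:path}. By Lemma~\ref{lem:jacobian} we may also replace $V$ by $Y$ if desired, though the statement is phrased with $V$.

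\textbf{Step 2 (the bound).} Write $g(t)=V^\top S(\beta_t)e\in\R^{d_v}$, so that $f(z)-f(\widehat z)=\int_0^1 g(t)\,dt$. The map $a\mapsto\norm{a}_G^2=a^\top Ga$ is convex, since $G\succeq0$. Jensen's inequality for the uniform probability measure on $[0,1]$ therefore gives
\[
\norm{\int_0^1 g\,dt}_G^2\le\int_0^1\norm{g(t)}_G^2\,dt .
\]
To avoid appealing to Jensen for vector integrals, one can argue directly instead. Factor $\norm{a}_G=\norm{G^{1/2}a}_2$. Apply the triangle inequality for Bochner/Riemann integrals to get $\norm{\int G^{1/2}g}_2\le\int\norm{G^{1/2}g}_2$, and then apply Cauchy--Schwarz on $[0,1]$ to get $\bigl(\int\norm{G^{1/2}g}_2\bigr)^2\le\int\norm{G^{1/2}g}_2^2$. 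Finally, expand the integrand using the symmetry of $S(\beta_t)$ from Lemma~\ref{lem:softmax-facts}:
\[
\norm{g(t)}_G^2=e^\top S(\beta_t)VGV^\top S(\beta_t)e ,
\]
which is exactly the claimed integrand.

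\textbf{Main obstacle.} There is no real obstacle here. The only care needed is to justify exchanging the convex square-seminorm with the integral. The $G^{1/2}$ reduction to the Euclidean norm handles this cleanly, because $G$ is symmetric positive semidefinite and so has a symmetric positive semidefinite square root. Continuity of $t\mapsto g(t)$ ensures that all the integrals exist.
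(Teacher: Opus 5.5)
Your proposal is correct and matches the paper's proof: the fundamental theorem of calculus along $z_t$, with Lemma~\ref{lem:jacobian} giving the derivative, and then Jensen's inequality applied to $G^{1/2}V^\top S(\beta_t)e$, using the symmetry of $S(\beta_t)$ to identify the integrand. Your alternative route through the triangle inequality and Cauchy--Schwarz is a valid elementary replacement for the vector Jensen step.
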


\begin{proof}
Let $g(t)=f(z_t)$. By Lemma~\ref{lem:jacobian},
\[
g'(t)
=
Df(z_t)[e]
=
V^\top S(\beta_t)e.
\]
The fundamental theorem of calculus gives \eqref{eq:path}.

For the second claim, define
$a_t=G^{1/2}V^\top S(\beta_t)e$.
Jensen's inequality yields
\[
\left\|
\int_0^1a_t\,dt
\right\|_2^2
\leq
\int_0^1\norm{a_t}_2^2\,dt,
\]
and
\[
\norm{a_t}_2^2
=
e^\top S(\beta_t)VGV^\top S(\beta_t)e.
\]
\end{proof}

\begin{proposition}[Deterministic value diameter bound]
\label{prop:diameter}
Define
\[
\operatorname{diam}_G(V)
=
\max_{i,j}\norm{v_i-v_j}_G .
\]
Then for all $z,\widehat z\in\R^m$,
\begin{equation}
\norm{f(z)-f(\widehat z)}_G
\leq
\operatorname{diam}_G(V)
\norm{z-\widehat z}_\infty .
\label{eq:diameter}
\end{equation}
\end{proposition}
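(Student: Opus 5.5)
We would prove \eqref{eq:diameter} from the exact path identity of Proposition~\ref{prop:path}, bounding the directional derivative pointwise along the segment rather than through the Jensen-type quadratic bound. Set $e=z-\widehat z$, $z_t=\widehat z+te$, $\beta_t=\softmax(z_t)$ and $f_t=f(z_t)=V^\top\beta_t$. By Lemma~\ref{lem:jacobian},
\[
\frac{d}{dt}f(z_t)=V^\top S(\beta_t)e=\sum_{i=1}^m\beta_{t,i}\bigl(v_i-f_t\bigr)e_i .
\]
The triangle inequality for the seminorm $\norm{\cdot}_G$ (valid since $G\succeq0$, so $\norm{a}_G=\norm{G^{1/2}a}_2$) then gives $\norm{f(z)-f(\widehat z)}_G\le\int_0^1\norm{V^\top S(\beta_t)e}_G\,dt$. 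It therefore suffices to show $\norm{V^\top S(\beta)e}_G\le\operatorname{diam}_G(V)\norm{e}_\infty$ for every probability vector $\beta$.

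For this pointwise step, the naive route bounds the sum by $\sum_i\beta_i\norm{v_i-f}_G|e_i|$. Here $f=\sum_j\beta_jv_j$ is a convex combination, so convexity of the seminorm gives $\norm{v_i-f}_G\le\sum_j\beta_j\norm{v_i-v_j}_G\le\operatorname{diam}_G(V)$. Combined with $\sum_i\beta_i|e_i|\le\norm{e}_\infty$, this yields exactly the required constant. We expect no real obstacle, since each step is a triangle or convexity inequality.

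As a sharper alternative, one can use the symmetrized form
\[
V^\top S(\beta)e=\tfrac12\sum_{i,j}\beta_i\beta_j(v_i-v_j)(e_i-e_j),
\]
which would even give $\tfrac12\operatorname{diam}_G(V)\,\max_{i,j}|e_i-e_j|$. This is not needed for the statement as posed, so we only remark on it.

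Integrating the pointwise bound over $t\in[0,1]$ completes the argument.
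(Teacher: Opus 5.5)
Your proposal is correct and follows essentially the same argument as the paper: you bound the directional derivative $\sum_i\beta_i(v_i-f)h_i$ pointwise, use the fact that $f$ lies in the convex hull of the values to get $\norm{v_i-f}_G\le\operatorname{diam}_G(V)$, and then integrate along the segment. Your symmetrized remark, giving the sharper constant $\tfrac12\operatorname{diam}_G(V)\max_{i,j}|e_i-e_j|$, is also correct but goes beyond what the paper proves.
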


\begin{proof}
For any probability vector $b$, $V^\top b$ lies in the convex hull of
$\{v_1,\ldots,v_m\}$. Hence
\[
\norm{v_i-V^\top b}_G
\leq
\operatorname{diam}_G(V).
\]
Using Lemma~\ref{lem:jacobian},
\[
\begin{aligned}
\norm{Df(z)[h]}_G
&\leq
\sum_{i=1}^m
\beta_i(z)
\norm{v_i-f(z)}_G
|h_i|\\
&\leq
\operatorname{diam}_G(V)
\norm{h}_\infty.
\end{aligned}
\]
Integrating along the segment between $\widehat z$ and $z$ proves
\eqref{eq:diameter}.
\end{proof}

\begin{proposition}[Global sensitivity of centered page perturbations]
\label{prop:prefix-cancel}
Fix the query, the page values, and the background cache. Let the
background contribute denominator $Z_0>0$ and value numerator
$n_0\in\R^{d_v}$. Hold the mean key factor
$c=\exp(\mu^\top q/\sqrt d)>0$ fixed, and define the full output and
page attention probability by
\[
o_{\mathrm{glob}}(z)
=
\frac{n_0+cV^\top\exp(z)}{Z_0+c\onev^\top\exp(z)},
\qquad
p(z)
=
\frac{c\onev^\top\exp(z)}{Z_0+c\onev^\top\exp(z)},
\]
where $\exp$ acts componentwise. For any $z,h\in\R^m$,
\[
Do_{\mathrm{glob}}(z)[h]
=
p(z)Df(z)[h]
+
p(z)\bigl(f(z)-o_{\mathrm{glob}}(z)\bigr)\beta(z)^\top h,
\]
and
\[
Dp(z)[h]
=
p(z)(1-p(z))\beta(z)^\top h.
\]
Let $T_0=Z_0+cm$ and $p_0=p(0)=cm/T_0$.
For every centered perturbation $h$ with $\onev^\top h=0$,
$Dp(0)[h]=0$ and
\begin{equation}
Do_{\mathrm{glob}}(0)[h]
=
p_0 Df(0)[h]
=
\frac{p_0}{m}Y^\top h
=
\frac{c}{T_0}Y^\top h.
\label{eq:global-centered}
\end{equation}
\end{proposition}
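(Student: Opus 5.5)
The plan is direct differentiation of the two quotients, followed by specialization at $z=0$. Write $s(z)=\onev^\top\exp(z)$ and $n(z)=V^\top\exp(z)$, so that $\beta(z)=\exp(z)/s(z)$ and $f(z)=n(z)/s(z)$. Then
\[
o_{\mathrm{glob}}(z)=\frac{n_0+c\,n(z)}{Z_0+c\,s(z)},\qquad
p(z)=\frac{c\,s(z)}{Z_0+c\,s(z)},\qquad
T(z)=Z_0+c\,s(z).
\]
The directional derivatives of the ingredients are elementary:
\[
Ds(z)[h]=\exp(z)^\top h=s(z)\,\beta(z)^\top h,\qquad
Dn(z)[h]=V^\top\Diag(\exp(z))h=s(z)\,V^\top\Diag(\beta(z))h.
\]

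\textbf{Step 1 (the $p$ formula).} Apply the quotient rule to $p=cs/T$. This gives
\[
Dp=\frac{c\,Ds}{T}-\frac{c\,s\cdot c\,Ds}{T^2}=\frac{c\,Ds}{T}\,(1-p).
\]
Substituting $Ds=s\,\beta^\top h$ and $cs/T=p$ yields $Dp(z)[h]=p(1-p)\,\beta^\top h$.

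\textbf{Step 2 (the $o_{\mathrm{glob}}$ formula).} The quotient rule gives
\[
Do_{\mathrm{glob}}[h]=\frac{c\,Dn[h]-o_{\mathrm{glob}}\,c\,Ds[h]}{T}
=p\,V^\top\Diag(\beta)h-p\,o_{\mathrm{glob}}\,\beta^\top h.
\]
The key move is to add and subtract $p\,f\,\beta^\top h$ and to use $f=V^\top\beta$. This turns the first piece into $p\,V^\top(\Diag(\beta)-\beta\beta^\top)h=p\,V^\top S(\beta)h$, which equals $p\,Df(z)[h]$ by Lemma~\ref{lem:jacobian}. What remains is $p(f-o_{\mathrm{glob}})\beta^\top h$, and together these give the claimed decomposition.

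\textbf{Step 3 (specialize to $z=0$).} At $z=0$ we have $\beta(0)=\onev/m$ and $s(0)=m$, so $T_0=Z_0+cm$ and $p_0=cm/T_0$. If $\onev^\top h=0$, then $\beta(0)^\top h=0$. This kills $Dp(0)[h]$ and the second term of $Do_{\mathrm{glob}}(0)[h]$, leaving $p_0\,Df(0)[h]$. By Lemma~\ref{lem:jacobian},
\[
Df(0)[h]=Y^\top S(\onev/m)h=Y^\top(H/m)h=\tfrac1m Y^\top h,
\]
using Lemma~\ref{lem:softmax-facts} and $Hh=h$ for centered $h$. Finally, $p_0/m=c/T_0$.

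None of these steps is a real obstacle. The only point needing care is Step 2, where one must add and subtract the $f\,\beta^\top h$ term to recognize $S(\beta)$ rather than leave an unstructured expression.
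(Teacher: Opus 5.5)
Your proposal is correct and takes essentially the same route as the paper. You differentiate both quotients with the quotient rule, use $c\exp(z)/T(z)=p(z)\beta(z)$ and $f=V^\top\beta$ to recognize $p\,V^\top S(\beta)h$, and then specialize at $z=0$, where $\beta(0)^\top h=0$ for centered $h$. The differences are cosmetic: the paper differentiates $p$ written as $(T-Z_0)/T$, and it obtains $Df(0)[h]=Y^\top h/m$ from $V^\top H=Y^\top$ rather than from $Hh=h$.
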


\begin{proof}
Write $o_{\mathrm{glob}}(z)=N(z)/T(z)$ with
$N(z)=n_0+cV^\top\exp(z)$ and
$T(z)=Z_0+c\onev^\top\exp(z)$.
Their derivatives are
\[
DN(z)[h]=cV^\top\Diag(\exp(z))h,
\qquad
DT(z)[h]=c\exp(z)^\top h.
\]
Since $c\exp(z)/T(z)=p(z)\beta(z)$, the quotient rule gives
\[
\begin{aligned}
Do_{\mathrm{glob}}(z)[h]
&=p(z)\left(
V^\top\Diag(\beta(z))h-o_{\mathrm{glob}}(z)\beta(z)^\top h
\right)\\
&=p(z)Df(z)[h]
+p(z)\bigl(f(z)-o_{\mathrm{glob}}(z)\bigr)\beta(z)^\top h.
\end{aligned}
\]
The second equality uses Lemma~\ref{lem:jacobian} and
$f(z)=V^\top\beta(z)$. Applying the quotient rule to
$p(z)=(T(z)-Z_0)/T(z)$ also yields
\[
Dp(z)[h]
=
\frac{Z_0}{T(z)}\frac{DT(z)[h]}{T(z)}
=
(1-p(z))p(z)\beta(z)^\top h.
\]
At uniform attention, $\beta(0)=\onev/m$, so a centered $h$ satisfies
$\beta(0)^\top h=0$. Hence the page mass derivative and the additional
term in $Do_{\mathrm{glob}}(0)[h]$ both vanish. Finally,
$S(\beta(0))=H/m$ and $V^\top H=Y^\top$, so
$Df(0)[h]=Y^\top h/m$. Substituting $p_0=cm/T_0$ proves
\eqref{eq:global-centered}.
\end{proof}

\subsection{Output aware page basis}
\label{app:proofs-basis}

\begin{proposition}[First order conditional output error]
\label{prop:first-order}
For any $q\in\R^d$ and $B\in\St(d,r)$,
\begin{equation}
f(z(q))-f(\widehat z_B(q))
=
\frac{1}{m\sqrt d}Y^\top X(I_d-P_B)q+r_B(q).
\label{eq:leading}
\end{equation}
Let $\rho=\max\{\norm{z(q)}_2,\norm{\widehat z_B(q)}_2\}$ and define
\[
L_{V,G}(\rho)
=
\sup_{\substack{u,a,b\in\R^m\\
\norm{u}_2\leq\rho,\ \norm{a}_2=\norm{b}_2=1}}
\norm{D^2f(u)[a,b]}_G,
\]
where $D^2f(u)[a,b]$ is the second derivative applied to directions
$a,b$. This quantity is finite, and the remainder satisfies
\[
\norm{r_B(q)}_G
\leq L_{V,G}(\rho)\rho\norm{e_B(q)}_2.
\]
\end{proposition}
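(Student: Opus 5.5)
We split the difference around the uniform reference $z=0$ rather than integrating directly along the segment. We write
\[
f(z)-f(\widehat z)=\bigl[f(z)-f(0)-Df(0)[z]\bigr]-\bigl[f(\widehat z)-f(0)-Df(0)[\widehat z]\bigr]+Df(0)[z-\widehat z],
\]
with $z=z(q)$ and $\widehat z=\widehat z_B(q)$. The leading term comes from Lemma~\ref{lem:jacobian} and Lemma~\ref{lem:softmax-facts}: $Df(0)[h]=Y^\top S(\onev/m)h=Y^\top Hh/m$. Since $e_B(q)=X(I_d-P_B)q/\sqrt d$ and $\onev^\top X=0$, we have $He_B=e_B$. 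This gives exactly $\frac{1}{m\sqrt d}Y^\top X(I_d-P_B)q$, so $r_B(q)$ is defined as the difference of the two Taylor remainders.

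However, bounding each Taylor remainder separately gives $O(\rho^2)$. That does not yield a bound proportional to $\norm{e_B}_2$. We therefore work along the path instead. Proposition~\ref{prop:path} gives
\[
f(z)-f(\widehat z)=\int_0^1 Df(z_t)[e]\,dt, \qquad z_t=\widehat z+te,
\]
so
\[
r_B(q)=\int_0^1\bigl(Df(z_t)-Df(0)\bigr)[e]\,dt .
\]
For each $t$ we apply the fundamental theorem of calculus once more, along the ray from $0$ to $z_t$:
\[
Df(z_t)[e]-Df(0)[e]=\int_0^1 D^2f(sz_t)[z_t,e]\,ds .
\]
By convexity of the ball, $\norm{z_t}_2\le\max\{\norm{z}_2,\norm{\widehat z}_2\}=\rho$, and hence $\norm{sz_t}_2\le\rho$. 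By bilinearity and the definition of $L_{V,G}(\rho)$,
\[
\norm{D^2f(sz_t)[z_t,e]}_G\le L_{V,G}(\rho)\,\norm{z_t}_2\,\norm{e}_2\le L_{V,G}(\rho)\,\rho\,\norm{e}_2 .
\]
Integrating over $s$ and $t$ yields the stated remainder bound.

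Finiteness of $L_{V,G}(\rho)$ follows because $f$ is smooth (it is a composition of exp, a quotient with a positive denominator, and a linear map). Its second derivative is therefore continuous, and the supremum is taken over a compact set: the closed $\rho$-ball times two unit spheres. We can also give an explicit bound by differentiating $S(\beta(z))$, since $D^2 f$ is $V^\top$ applied to third cumulant-type terms of $\beta$, all of which are bounded by constants times $\norm{V}$. The compactness argument is enough, though.

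The main subtlety is the one noted above: we must obtain a remainder linear in $\norm{e}_2$ rather than quadratic in $\rho$. The path-based double integral, combined with convexity keeping $z_t$ inside the $\rho$-ball, is what handles this. The degenerate case $\rho=0$ is trivial, since then $e=0$ and both sides vanish.
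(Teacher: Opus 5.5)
Your proof is correct and follows essentially the same route as the paper: you get the leading term from $S(\beta(0))=H/m$, write $r_B(q)$ as the path integral of $(Df(z_t)-Df(0))[e]$, apply the fundamental theorem of calculus along the ray from $0$ to $z_t$, bound the result using $\norm{z_t}_2\le\rho$, and obtain finiteness of $L_{V,G}(\rho)$ from compactness. Your opening split into two Taylor remainders at zero matches the supplementary quadratic bound that the paper records for completeness.
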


\begin{proof}[Proof of Proposition~\ref{prop:first-order}]
Fix $q$ and $B$, and write $z=z(q)$, $\widehat z=\widehat z_B(q)$,
$e=e_B(q)=z-\widehat z$, and
$\rho=\max\{\norm{z}_2,\norm{\widehat z}_2\}$.
The quantity $L_{V,G}(\rho)$ is finite because $D^2f$ is continuous on
the compact product of the closed ball and the two unit spheres in its
definition. At uniform attention, $S(\beta(0))=H/m$, so
\[
Df(0)[e]=\frac1mV^\top He
=\frac1mY^\top e
=\frac{1}{m\sqrt d}Y^\top X(I_d-P_B)q.
\]

Let $z_t=\widehat z+te$ for $0\leq t\leq1$. The exact path identity gives
\[
f(z)-f(\widehat z)
=Df(0)[e]+r_B(q),
\qquad
r_B(q)=\int_0^1\bigl(Df(z_t)-Df(0)\bigr)[e]\,dt.
\]
This defines the remainder as a vector in $\R^{d_v}$ and establishes
\eqref{eq:leading} independently of whether $G$ is invertible.
Since $\norm{z_t}_2\leq\rho$,
\[
\bigl(Df(z_t)-Df(0)\bigr)[e]
=\int_0^1 D^2f(sz_t)[z_t,e]\,ds.
\]
Taking the $G$-seminorm, applying the triangle inequality, and using the
definition of $L_{V,G}(\rho)$ yields
\begin{equation}
\norm{r_B(q)}_G
\leq L_{V,G}(\rho)\rho\norm{e_B(q)}_2.
\label{eq:relative-remainder}
\end{equation}
The triangle inequality remains valid for singular $G$, since
$\norm{a}_G=\norm{G^{1/2}a}_2$.

For completeness, Taylor expansion at zero also gives
\[
f(a)=f(0)+Df(0)[a]+R(a),
\qquad
R(a)=\int_0^1(1-t)D^2f(ta)[a,a]\,dt.
\]
For $\norm{a}_2\leq\rho$,
$\norm{R(a)}_G\leq L_{V,G}(\rho)\norm{a}_2^2/2$.
Since $r_B(q)=R(z)-R(\widehat z)$, this implies
\begin{equation}
\norm{r_B(q)}_G
\leq\frac12L_{V,G}(\rho)
\left(\norm{z(q)}_2^2+\norm{\widehat z_B(q)}_2^2\right).
\label{eq:quadratic-remainder}
\end{equation}
\end{proof}

\paragraph{Derivation and local meaning of the output surrogate.}
The expected squared linearized conditional error is
\[
\begin{aligned}
\cR_{\mathrm{out}}(B)
&=\E_q\left\|\frac{Y^\top X(I_d-P_B)q}{m\sqrt d}\right\|_G^2\\
&=\frac{1}{m^2d}\E_q\left[
q^\top(I_d-P_B)M_{\mathrm{out}}(I_d-P_B)q\right]\\
&=\frac{1}{m^2d}\trace\!\left(
(I_d-P_B)M_{\mathrm{out}}(I_d-P_B)\right),
\end{aligned}
\]
where the last equality uses $\E_q[qq^\top]=I_d$ and
$M_{\mathrm{out}}=X^\top YGY^\top X$. This proves
\eqref{eq:out-risk}. The Gram factorization
$M_{\mathrm{out}}=(G^{1/2}Y^\top X)^\top(G^{1/2}Y^\top X)$ shows it is
positive semidefinite. Lemma~\ref{lem:kyfan} therefore gives the stated
eigenspace and singular subspace minimizers, including the freedom to
choose within tied eigenspaces. If its rank is below $r$, all positive
eigendirections must be retained and the remaining directions can be
chosen orthonormally in its nullspace. This includes $M_{\mathrm{out}}=0$,
for which every $B\in\St(d,r)$ has zero surrogate risk.

To relate the surrogate to the exact conditional error, scale queries by
$t>0$ while keeping $X,V,G$, and $B$ fixed. Differentiability at zero gives
\[
\frac{f(z(tq))-f(\widehat z_B(tq))}{t}
\longrightarrow \frac1mY^\top e_B(q)
\qquad\text{as }t\to0.
\]
By Proposition~\ref{prop:diameter} and $e_B(tq)=te_B(q)$,
\[
\left\|\frac{f(z(tq))-f(\widehat z_B(tq))}{t}\right\|_G^2
\leq \operatorname{diam}_G(V)^2\norm{e_B(q)}_2^2.
\]
The right side is integrable under Assumption~\ref{ass:query-second-moment}.
Dominated convergence therefore proves
\[
\E_q\norm{f(z(tq))-f(\widehat z_B(tq))}_G^2
=t^2\cR_{\mathrm{out}}(B)+o(t^2)
\qquad\text{as }t\to0.
\]
Thus the surrogate is the leading coefficient of the expected squared
conditional error in this local limit. No higher query moment is needed
for this statement, and it does not assert optimality for finite-logit
nonlinear error.

The full cache derivative gives a different weighting when averaged over
queries. For each query, fix its background and mean key factor as in
Proposition~\ref{prop:prefix-cancel}, and let $p_0(q)$ be the page
probability at uniform within page attention. Applied to $e_B(q)$, the
expected squared first order full cache response is
\[
\frac{1}{m^2d}\E_q\left[
p_0(q)^2 q^\top(I_d-P_B)M_{\mathrm{out}}(I_d-P_B)q\right].
\]
The condition $\E_q[qq^\top]=I_d$ alone does not remove the additional
query-dependent weight $p_0(q)^2$. The objective in
\eqref{eq:out-risk} deliberately uses the conditional response, allowing
construction from the page alone.

\paragraph{Why output sensitivity differs from key fidelity.}

\begin{proposition}[Marginal value covariance does not identify output sensitivity]
\label{prop:covariance-insufficient}
For every token permutation matrix $\Pi$ and $Y'=\Pi Y$,
\[
(Y')^\top Y'=Y^\top Y.
\]
However, there exist $X,Y,\Pi$, and $G$ such that
\[
X^\top Y'G(Y')^\top X
\neq
X^\top YGY^\top X.
\]
Consequently, the marginal value covariance $Y^\top Y$ does not determine
$M_{\mathrm{out}}$.
\end{proposition}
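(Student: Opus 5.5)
The plan has two parts: prove the invariance directly, then give one small explicit counterexample.

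\textbf{Invariance.} A permutation matrix is orthogonal, so $\Pi^\top\Pi=I_m$. This gives
\[
(Y')^\top Y'=Y^\top\Pi^\top\Pi Y=Y^\top Y.
\]
The permuted values also remain centered, since $\onev^\top\Pi Y=(\Pi^\top\onev)^\top Y=\onev^\top Y=0$. So $Y'$ is an admissible centered value matrix, and the construction stays within the setting of \Secref{sec:method-setup}.

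\textbf{Counterexample.} The point is that $M_{\mathrm{out}}$ depends on the cross-covariance $X^\top Y$, which pairs keys with values token by token. A permutation applied only to the values breaks this pairing. I would take the smallest case:
\begin{itemize}
\item $m=2$, $d=2$ (so $r=1<d$), $d_v=1$, and $G=1$.
\item $X$ with rows $(1,0)$ and $(-1,0)$.
\item $Y$ with rows $1$ and $-1$. Both $X$ and $Y$ are centered.
\item $\Pi$ the swap of the two tokens, so $Y'=-Y$.
\end{itemize}
This choice fails: $M_{\mathrm{out}}$ is quadratic in $Y$, so $X^\top Y'G(Y')^\top X=X^\top YGY^\top X$, and the example does not separate the two matrices. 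Swapping two tokens with $m=2$ only flips a sign.

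I would therefore move to $m=3$, where a permutation can change the pairing non-trivially. Take:
\begin{itemize}
\item $X$ with rows $(1,0)$, $(-1,0)$, $(0,0)$.
\item $Y$ with rows $1$, $-1$, $0$.
\item $\Pi$ the permutation swapping tokens $2$ and $3$, so $Y'$ has rows $1$, $0$, $-1$.
\end{itemize}
Then
\[
X^\top Y=(2,0)^\top, \qquad X^\top Y'=(1,0)^\top.
\]
With $G=1$ this gives
\[
M_{\mathrm{out}}=\Diag(4,0)\neq\Diag(1,0)=M'_{\mathrm{out}},
\]
while $(Y')^\top Y'=Y^\top Y=2$.

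\textbf{Conclusion.} The pairs $(X,Y)$ and $(X,Y')$ have identical key matrices and identical marginal value covariances, yet different $M_{\mathrm{out}}$. Hence $Y^\top Y$ (even together with $X$) does not determine $M_{\mathrm{out}}$.

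The only real obstacle is the sign-symmetry trap in the $m=2$ case, which I avoid by using three tokens. If one wants the difference visible in the minimizing subspace of Lemma~\ref{lem:kyfan}, and not just in the matrix, I would use $d=2$ with a second key coordinate chosen so that the leading eigenvector of $M_{\mathrm{out}}$ rotates under the permutation. That is not required for the stated claim.
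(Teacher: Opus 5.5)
Your proof is correct and matches the paper's. Both obtain the invariance from orthogonality of $\Pi$ and separate with a three-token transposition; the paper takes $d=d_v=1$, $X=Y=(1,0,-1)^\top$, swaps tokens $1$ and $2$, and gets $(X^\top Y)^2=4$ versus $(X^\top Y')^2=1$, which is your example up to relabeling tokens and padding the keys with a zero coordinate (your checks that $\Pi Y$ stays centered and that $m=2$ cannot work are correct but not needed).
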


\begin{proof}
Since every permutation matrix is orthogonal,
\[
(Y')^\top Y'
=
Y^\top\Pi^\top\Pi Y
=
Y^\top Y.
\]

For a separation example, take
$m=3$, $d=d_v=1$, $G=1$, and
\[
X=Y=
\begin{pmatrix}
1\\0\\-1
\end{pmatrix}.
\]
Let $\Pi$ exchange the first two tokens, so
\[
Y'
=
\begin{pmatrix}
0\\1\\-1
\end{pmatrix}.
\]
Then
\[
X^\top YY^\top X
=
(X^\top Y)^2
=
4,
\qquad
X^\top Y'(Y')^\top X
=
(X^\top Y')^2
=
1.
\]
Since $X$ is unchanged, the pages have identical $X^\top X$ and
$Y^\top Y$ but different output sensitivity matrices.
\end{proof}

\begin{proposition}[Key PCA versus the output aware basis]
\label{prop:counterexample}
Let $m\geq3$, $d=2$, $d_v=1$, and choose orthonormal
$a,b\in\onev^\perp$. Set
\[
X=
\begin{bmatrix}
\sigma_1a & \sigma_2b
\end{bmatrix},
\qquad
Y=b,
\qquad
G=1,
\]
with $\sigma_1>\sigma_2>0$.
For rank $r=1$, key PCA uniquely selects the first feature direction,
whereas the output aware basis uniquely selects the second.
The leading output risk of the latter is zero, while that of key PCA is
$\sigma_2^2/(m^2d)$.
\end{proposition}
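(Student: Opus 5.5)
We compute the two matrices to which Lemma~\ref{lem:kyfan} is applied and read off their leading eigenvectors. Let $e_1,e_2$ be the standard basis of $\R^2$. Since $a,b\in\onev^\perp$, the columns of $X$ and the vector $Y$ are centered, so they are legitimate centered page matrices; for instance, take $K=X$, so that $\mu=0$, and $V=Y$, so that $\nu=0$. Orthonormality of $a$ and $b$ gives
\[
X^\top X=\begin{bmatrix}\sigma_1^2&0\\0&\sigma_2^2\end{bmatrix}.
\]
Because $\sigma_1>\sigma_2$, the eigenvalues are strictly separated. By Lemma~\ref{lem:kyfan} with $r=1$, the key PCA subspace is therefore uniquely $\operatorname{span}(e_1)$, the first feature direction.

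For the output matrix, with $G=1$ we have $M_{\mathrm{out}}=X^\top YY^\top X=(X^\top b)(X^\top b)^\top$. Orthonormality gives $X^\top b=(\sigma_1 a^\top b,\ \sigma_2 b^\top b)^\top=(0,\sigma_2)^\top$, so
\[
M_{\mathrm{out}}=\begin{bmatrix}0&0\\0&\sigma_2^2\end{bmatrix}.
\]
Its eigenvalues are $\sigma_2^2>0$, so again there is strict separation. Lemma~\ref{lem:kyfan} then says the output aware rank-one subspace is uniquely $\operatorname{span}(e_2)$. This is the $\eta=1$ endpoint, i.e.\ the minimizer of $\cR_{\mathrm{out}}$.

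For the risks, we apply \eqref{eq:out-risk}. For $B=e_2$, $I_d-P_B=e_1e_1^\top$ and $e_1^\top M_{\mathrm{out}}e_1=0$, so the leading output risk is $0$. For key PCA, $B=e_1$, $I_d-P_B=e_2e_2^\top$, and the risk is $\sigma_2^2/(m^2d)$; with $d=2$ this is $\sigma_2^2/(2m^2)$.

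There is no real obstacle here. The only care needed is to check that the hypotheses $m\ge3$ and $a,b\in\onev^\perp$ are consistent: $\onev^\perp$ has dimension $m-1\ge2$, so two orthonormal vectors in it exist. It is also worth confirming that the uniqueness claims rely on the strict eigengaps noted above, namely $\sigma_1>\sigma_2$ for key PCA and $\sigma_2>0$ for the output aware basis.
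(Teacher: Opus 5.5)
Your proof is correct and follows the paper's argument essentially step for step: orthonormality gives $X^\top X=\operatorname{diag}(\sigma_1^2,\sigma_2^2)$ and $M_{\mathrm{out}}=\operatorname{diag}(0,\sigma_2^2)$, Lemma~\ref{lem:kyfan} with the strict eigengaps gives uniqueness, and \eqref{eq:out-risk} reads off the discarded eigenvalue as the risk. One small wording slip: the eigenvalues of $M_{\mathrm{out}}$ are $\sigma_2^2$ and $0$, and it is the gap $\sigma_2^2>0$ between them that gives uniqueness.
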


\begin{proof}
By orthonormality,
\[
X^\top X
=
\begin{pmatrix}
\sigma_1^2 & 0\\
0 & \sigma_2^2
\end{pmatrix},
\qquad
Y^\top X
=
\begin{pmatrix}
0 & \sigma_2
\end{pmatrix}.
\]
Therefore
\[
M_{\mathrm{out}}
=
X^\top YY^\top X
=
\begin{pmatrix}
0 & 0\\
0 & \sigma_2^2
\end{pmatrix}.
\]
Since $\sigma_1^2>\sigma_2^2$, key PCA selects the first direction,
whereas Lemma~\ref{lem:kyfan} gives the second direction for the output
objective.

Using \eqref{eq:out-risk}, the second direction discards only the zero
eigenvalue and therefore incurs zero leading output risk. The first
direction discards $\sigma_2^2$, yielding
$\sigma_2^2/(m^2d)$.
\end{proof}

\subsection{Page mass fidelity and the hybrid basis}
\label{app:proofs-hybrid}

\begin{lemma}[Log sum exponential expansion with explicit remainder]
\label{lem:lse-expansion}
For every $h\in\onev^\perp$,
\begin{equation}
\left|
\operatorname{lse}(h)
-
\log m
-
\frac{1}{2m}\norm{h}_2^2
\right|
\leq
\frac13
\norm{h}_\infty
\norm{h}_2^2 .
\label{eq:lse-remainder}
\end{equation}
\end{lemma}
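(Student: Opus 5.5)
The plan is a one-dimensional Taylor expansion of $\operatorname{lse}$ along the ray through $h$, with an integral-form remainder at third order. Set $\varphi(t)=\operatorname{lse}(th)$ for $t\in[0,1]$ and $\beta_t=\softmax(th)$. Then $\varphi(0)=\log m$, and Taylor's theorem gives
\[
\varphi(1)=\varphi(0)+\varphi'(0)+\tfrac12\varphi''(0)+\int_0^1\tfrac{(1-t)^2}{2}\,\varphi'''(t)\,dt .
\]
By Lemma~\ref{lem:softmax-facts}, $\varphi'(0)=D\operatorname{lse}(0)[h]=\onev^\top h/m=0$, since $h\in\onev^\perp$. Also $\varphi''(0)=h^\top D^2\operatorname{lse}(0)h=h^\top Hh/m=\norm{h}_2^2/m$, again using $Hh=h$ for centered $h$. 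So the left side of \eqref{eq:lse-remainder} is exactly the absolute value of the integral remainder, and it remains to bound $|\varphi'''(t)|$ uniformly in $t$.

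For the third derivative, I would use the cumulant interpretation that extends Lemma~\ref{lem:softmax-facts}. Regard $h_i$ as a random variable with $i\sim\beta_t$. Then $\varphi'(t)=\E_{\beta_t}[h_I]$ and $\varphi''(t)=\operatorname{Var}_{\beta_t}(h_I)$, which is the variance formula already shown in the proof of that lemma. Differentiating once more gives the third central moment,
\[
\varphi'''(t)=\E_{\beta_t}\bigl[(h_I-\bar h_t)^3\bigr],\qquad \bar h_t=\beta_t^\top h .
\]
This can be checked directly: differentiating $\sum_i\beta_{t,i}(h_i-\bar h_t)^2$ produces $\sum_i\beta_{t,i}h_i(h_i-\bar h_t)^2$ from $\dot\beta_{t,i}=\beta_{t,i}(h_i-\bar h_t)$. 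It also produces a term $-2\dot{\bar h}_t\sum_i\beta_{t,i}(h_i-\bar h_t)$, which vanishes. Subtracting $\bar h_t\cdot\operatorname{Var}$, whose total weight is again centered, leaves the central third moment.

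The bound then has three elementary pieces. First, $\bar h_t$ is a convex combination of the $h_i$, so $|h_i-\bar h_t|\le 2\norm{h}_\infty$ for every $i$. This gives $|\varphi'''(t)|\le 2\norm{h}_\infty\operatorname{Var}_{\beta_t}(h_I)$. Second, $\operatorname{Var}_{\beta_t}(h_I)\le\E_{\beta_t}[h_I^2]=\sum_i\beta_{t,i}h_i^2\le\norm{h}_2^2$, since each $\beta_{t,i}\le1$. Together these give $|\varphi'''(t)|\le 2\norm{h}_\infty\norm{h}_2^2$ for all $t$. Third, $\int_0^1(1-t)^2/2\,dt=1/6$, so the remainder is at most $\tfrac13\norm{h}_\infty\norm{h}_2^2$, which is exactly the constant claimed.

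The main point needing care is the third-derivative identity: all cross terms must cancel so that the bound involves the central third moment rather than a raw moment. The constant $1/3$ is tight for this route: it comes from the factor $2$ in the range bound combined with the $1/6$ from the remainder. The crude step $\beta_{t,i}\le 1$ costs nothing in the stated bound.
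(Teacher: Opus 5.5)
Your proposal is correct and matches the paper's proof: the paper also expands $t\mapsto\operatorname{lse}(th)$ to third order, writes the third derivative as the $\beta_t$-central third moment, and bounds it by $2\norm{h}_\infty\norm{h}_2^2$ using the same two steps (the range bound and variance $\le$ second moment $\le\norm{h}_2^2$). The only difference is cosmetic: you use the integral form of the remainder, $\int_0^1(1-t)^2/2\,dt=1/6$, where the paper uses the Lagrange form $\tfrac16 g'''(\tau)$.
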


\begin{proof}
Let
$g(t)=\operatorname{lse}(th)$ and
$\beta^{(t)}=\softmax(th)$.
By Lemma~\ref{lem:softmax-facts},
\[
g'(t)
=
\beta^{(t)\top}h,
\qquad
g''(t)
=
h^\top S(\beta^{(t)})h.
\]
Writing
$m_t=\beta^{(t)\top}h$,
differentiating once more gives
\[
g'''(t)
=
\sum_i
\beta_i^{(t)}
(h_i-m_t)^3.
\]
Therefore
\[
\begin{aligned}
|g'''(t)|
&\leq
\max_i|h_i-m_t|
\sum_i
\beta_i^{(t)}
(h_i-m_t)^2\\
&\leq
2\norm{h}_\infty
\sum_i
\beta_i^{(t)}h_i^2\\
&\leq
2\norm{h}_\infty\norm{h}_2^2 .
\end{aligned}
\]

Since $h\in\onev^\perp$,
\[
g(0)=\log m,
\qquad
g'(0)=0,
\qquad
g''(0)=\frac1m\norm{h}_2^2.
\]
Taylor's theorem gives
\[
\operatorname{lse}(h)
=
\log m
+
\frac{1}{2m}\norm{h}_2^2
+
\frac16g'''(\tau)
\]
for some $\tau\in(0,1)$.
Applying the bound above proves \eqref{eq:lse-remainder}.
\end{proof}

\paragraph{Derivation of the key fidelity surrogate.}
Applying Lemma~\ref{lem:lse-expansion} to the centered vectors
$z(q)$ and $\widehat z_B(q)$, and canceling the preserved mean key term,
gives
\[
\ell(q)-\widehat\ell_B(q)
=
\frac{\norm{z(q)}_2^2-\norm{\widehat z_B(q)}_2^2}{2m}
+\Delta_B(q),
\]
where
\[
|\Delta_B(q)|
\leq
\frac13\left(
\norm{z(q)}_\infty\norm{z(q)}_2^2
+
\norm{\widehat z_B(q)}_\infty\norm{\widehat z_B(q)}_2^2
\right).
\]
Since $z(q)=\widehat z_B(q)+e_B(q)$,
\[
\norm{z(q)}_2^2-\norm{\widehat z_B(q)}_2^2
=
\norm{e_B(q)}_2^2+2\widehat z_B(q)^\top e_B(q).
\]
Although $P_Bq$ and $(I_d-P_B)q$ are orthogonal, multiplication by $X$
need not preserve their orthogonality. Under $\E_q[qq^\top]=I_d$,
however, cyclicity of the trace and $(I_d-P_B)P_B=0$ give
\[
\begin{aligned}
\E_q[\widehat z_B(q)^\top e_B(q)]
&=\frac1d\trace\!\left(P_BX^\top X(I_d-P_B)\right)\\
&=\frac1d\trace\!\left((I_d-P_B)P_BX^\top X\right)=0.
\end{aligned}
\]
Moreover, the same second moment assumption yields the exact identity
\begin{equation}
\cR_{\mathrm{key}}(B)
=
\E_q\norm{e_B(q)}_2^2
=
\frac1d
\trace\!\left(
(I_d-P_B)X^\top X(I_d-P_B)
\right),
\label{eq:key-risk-app}
\end{equation}
which is \eqref{eq:key-risk} and holds without a small-logit assumption.

There is also a global bound on squared log-mass error. If
$a=\norm{e_B(q)}_\infty$, then componentwise
$\widehat z_B(q)-a\onev\leq z(q)\leq\widehat z_B(q)+a\onev$.
Monotonicity and the identity
$\operatorname{lse}(u+c\onev)=\operatorname{lse}(u)+c$ imply
\[
|\ell(q)-\widehat\ell_B(q)|
\leq\norm{e_B(q)}_\infty
\leq\norm{e_B(q)}_2.
\]
Squaring and taking expectations therefore gives
\[
\E_q[(\ell(q)-\widehat\ell_B(q))^2]
\leq\cR_{\mathrm{key}}(B).
\]
This bound requires only the finite second moment, with no restriction
on the logit magnitudes.

To state the local approximation in expectation, additionally assume
$\E_q\norm{q}_2^3<\infty$. For fixed $X$ and $B$, the remainder bound
is integrable, and
\[
\E_q[\ell(q)-\widehat\ell_B(q)]
=
\frac{1}{2m}\cR_{\mathrm{key}}(B)+\E_q\Delta_B(q).
\]
In particular, scaling queries by $t\geq0$ gives the precise local statement
\[
\E_q[\ell(tq)-\widehat\ell_B(tq)]
=
\frac{t^2}{2m}\cR_{\mathrm{key}}(B)+O(t^3)
\qquad\text{as }t\to0,
\]
because both centered logit vectors scale linearly with $t$ and the
remainder bound scales cubically. Here $q$ retains the assumed second
moment $\E_q[qq^\top]=I_d$; $t$ only parameterizes the local expansion.
Thus the local expansion identifies the leading expected signed log-mass
error; the squared-error relationship above is an upper bound. By
Lemma~\ref{lem:kyfan} with $M=X^\top X$, the risk minimizers are exactly the
bases of leading rank $r$ eigenspaces of $X^\top X$.

\paragraph{Interpretation of the trace scaling.}
For either component matrix $M\in\{X^\top X,M_{\mathrm{out}}\}$ and
$J=I_d-P_B$, positive semidefiniteness gives
\[
0\leq\frac{\trace(JMJ)}{\trace(M)+\varepsilon}
\leq\frac{\trace(M)}{\trace(M)+\varepsilon}<1.
\]
Without the stabilizer, this ratio is the fraction of the component's
energy discarded by the basis whenever $\trace(M)>0$. With
$\varepsilon>0$, a zero component contributes zero and all denominators
remain positive. The scaled component has trace
$\trace(M)/(\trace(M)+\varepsilon)$, which approaches one when
$\trace(M)$ is large relative to $\varepsilon$. The combined risk is a
weighted sum of these stabilized ratios divided by $d$.

\begin{proof}[Proof of Corollary~\ref{cor:hybrid}]
Let $J=I_d-P_B$. Under $\E_q[qq^\top]=I_d$, substituting
\eqref{eq:key-risk} and \eqref{eq:out-risk} gives
\[
\begin{aligned}
\cR_\eta(B)
&=\frac{1-\eta}{d}c_{\mathrm{key}}\trace(JX^\top XJ)
+\frac{\eta}{d}c_{\mathrm{out}}\trace(JM_{\mathrm{out}}J)\\
&=\frac1d\trace(JM_\eta J).
\end{aligned}
\]
The factors $m^2$ and $1/m^2$ cancel in the second term, giving exactly
the matrix $M_\eta$ in \eqref{eq:Meta}.

Both component matrices are positive semidefinite, and their
coefficients are nonnegative, so $M_\eta\succeq0$. Equivalently, define
\[
A_\eta=(1-\eta)c_{\mathrm{key}}I_m
+\eta c_{\mathrm{out}}YGY^\top\succeq0.
\]
Then
\[
M_\eta=X^\top A_\eta X
=(A_\eta^{1/2}X)^\top(A_\eta^{1/2}X).
\]
Lemma~\ref{lem:kyfan} now gives the leading $r$-dimensional invariant
subspaces of $M_\eta$ as the minimizers. If its eigenvalues are
$\lambda_1\geq\cdots\geq\lambda_d\geq0$, the minimum risk is
$d^{-1}\sum_{j>r}\lambda_j$. The Gram factorization also identifies the
minimizers as leading right singular subspaces of $A_\eta^{1/2}X$,
consistent with the construction in \Secref{sec:method-computation}.
\end{proof}

\paragraph{Recovery from the token space Gram matrix.}
The token space matrices $T=XX^\top$ and $W=YGY^\top$ used in
Algorithm~\ref{alg:method} satisfy
$\trace(T)=\trace(X^\top X)$ and
$\trace(WT)=\trace(X^\top YGY^\top X)$ by cyclicity of the trace.
Thus the algorithm computes the same normalization constants without
forming $d\times d$ matrices.

For the positive eigenpairs in \eqref{eq:small-gram},
\[
\begin{aligned}
B_+^\top B_+
&=\Lambda_s^{-1/2}U_s^\top CC^\top U_s\Lambda_s^{-1/2}=I_s,\\
M_\eta B_+
&=C^\top(CC^\top U_s)\Lambda_s^{-1/2}=B_+\Lambda_s.
\end{aligned}
\]
The positive eigenvalues of $C^\top C$ and $CC^\top$ coincide, so
$B_+$ contains leading orthonormal eigenvectors of $M_\eta$.
If $s=r$, it is already an optimal rank $r$ basis.

If $s<r$, then $s=\operatorname{rank}(C)$ and $B_+$ spans
$\operatorname{range}(M_\eta)$. Thus every column added by an
orthonormal completion lies in $\ker(M_\eta)$ and preserves the minimum
hybrid risk. To select among
these completions, let $D\in\R^{d\times(r-s)}$ satisfy
$D^\top D=I_{r-s}$ and $B_+^\top D=0$. With $B_\eta=[B_+,D]$,
the retained key energy is
\[
\trace(B_\eta^\top X^\top XB_\eta)
=\norm{XB_+}_F^2+\norm{XD}_F^2
=\norm{XB_+}_F^2+\norm{X_\perp D}_F^2.
\]
The first term is fixed. The second is maximized by the leading right
singular directions of $X_\perp$, restricted to the orthogonal complement
of $B_+$. Every positive right singular direction already lies in this
complement because $X_\perp B_+=0$. After all positive directions are
retained, any further orthonormal columns in the complement contribute
zero key energy. This proves that the completion used in
Algorithm~\ref{alg:method} minimizes $\cR_{\mathrm{key}}$ among all
completions of $B_+$ while remaining optimal for $\cR_\eta$.

\subsection{Extensions}
\label{app:proofs-extensions}

\subsubsection{Nonisotropic query distributions}

The main construction assumes
$\E_q[qq^\top]=I_d$. A general positive definite query second moment can be incorporated by
whitening.

\begin{corollary}[Query whitened output aware basis]
\label{cor:whitened}
Let
$\Sigma_q=\E_q[qq^\top]\succ0$,
$\xi=\Sigma_q^{-1/2}q$, and
$\widetilde X=X\Sigma_q^{1/2}$.
Then $\E[\xi\xi^\top]=I_d$.

Define
\[
\widetilde M_{\mathrm{out}}
=
\widetilde X^\top
YGY^\top
\widetilde X,
\]
together with
\[
\widetilde c_{\mathrm{key}}
=
\frac{1}{
\trace(\widetilde X^\top\widetilde X)
+\varepsilon
},
\qquad
\widetilde c_{\mathrm{out}}
=
\frac{1}{
\trace(\widetilde M_{\mathrm{out}})
+\varepsilon
}.
\]
The normalized page matrix in whitened coordinates is
\begin{equation}
\widetilde M_\eta
=
(1-\eta)
\widetilde c_{\mathrm{key}}
\widetilde X^\top\widetilde X
+
\eta
\widetilde c_{\mathrm{out}}
\widetilde M_{\mathrm{out}}.
\label{eq:whitened-meta}
\end{equation}
Its leading rank $r$ eigenspace minimizes the corresponding normalized
risk in the whitened query coordinates.
\end{corollary}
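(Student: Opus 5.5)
The plan is to reduce everything to the isotropic case by a change of variables, then invoke Lemma~\ref{lem:kyfan}. First, $\E[\xi\xi^\top]=\Sigma_q^{-1/2}\E[qq^\top]\Sigma_q^{-1/2}=I_d$, since $\Sigma_q^{-1/2}$ is symmetric. Next, I write the centered logits in whitened coordinates: $z(q)=Xq/\sqrt d=X\Sigma_q^{1/2}\xi/\sqrt d=\widetilde X\xi/\sqrt d$. The page is thus the same page with centered keys $\widetilde X$ probed by the isotropic query $\xi$. The mean term $\mu^\top q$ is retained exactly and plays no role.

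Second, I fix what ``the corresponding risk'' means. The approximation is taken in whitened coordinates: for $\widetilde B\in\St(d,r)$ with projector $P_{\widetilde B}$, set $\widehat z(\xi)=\widetilde XP_{\widetilde B}\xi/\sqrt d$ and $\widetilde e(\xi)=\widetilde X(I_d-P_{\widetilde B})\xi/\sqrt d$. In original coordinates this is the oblique projector $\Sigma_q^{1/2}P_{\widetilde B}\Sigma_q^{-1/2}$ applied to $q$. Scoring then uses the stored quantities $R=\widetilde X\widetilde B$ and $\widetilde B^\top\Sigma_q^{-1/2}q$.

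The whitened key and output risks are $\E_\xi\norm{\widetilde e(\xi)}_2^2$ and $\E_\xi\norm{Y^\top\widetilde e(\xi)/m}_G^2$. Their combination mirrors $\cR_\eta$, with weights $\widetilde c_{\mathrm{key}}$ and $m^2\widetilde c_{\mathrm{out}}$. The linearization step is unchanged, because it depends only on the logit residual and on $Df(0)[h]=Y^\top h/m$.

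Third, I compute the expectations exactly as in the derivations of \eqref{eq:key-risk} and \eqref{eq:out-risk}. Using $\E[\xi\xi^\top]=I_d$ and cyclicity of the trace, with $J=I_d-P_{\widetilde B}$:
\[
\E_\xi\norm{\widetilde e}_2^2=d^{-1}\trace\bigl(J\widetilde X^\top\widetilde XJ\bigr),
\qquad
\E_\xi\bigl\|Y^\top\widetilde e/m\bigr\|_G^2=(m^2d)^{-1}\trace\bigl(J\widetilde M_{\mathrm{out}}J\bigr).
\]
Combining the two gives $d^{-1}\trace(J\widetilde M_\eta J)$, where $\widetilde M_\eta$ is as in \eqref{eq:whitened-meta}.

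Finally, $\widetilde M_\eta\succeq0$. It is a nonnegative combination of Gram matrices, namely $\widetilde X^\top\widetilde X$ and $(G^{1/2}Y^\top\widetilde X)^\top(G^{1/2}Y^\top\widetilde X)$, and the coefficients are positive thanks to $\varepsilon>0$. Lemma~\ref{lem:kyfan} therefore identifies the minimizers as the leading $r$-dimensional invariant subspaces of $\widetilde M_\eta$. This is exactly the argument of Corollary~\ref{cor:hybrid} with $X$ replaced by $\widetilde X$.

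The main obstacle is not computational but definitional: stating precisely that the basis lives in whitened coordinates. A basis $B$ in the original coordinates with the orthogonal projector $P_B$ would instead give $\trace\bigl((I-P_B)X^\top X(I-P_B)\Sigma_q\bigr)$, which Lemma~\ref{lem:kyfan} does not directly handle. I would therefore make the parameterization by $\widetilde B$ explicit in the statement of the risk before invoking the lemma.
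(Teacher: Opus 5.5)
Your proposal is correct and follows the paper's proof. You substitute $q=\Sigma_q^{1/2}\xi$ so that $z(q)=\widetilde X\xi/\sqrt d$, rerun the derivations of \eqref{eq:key-risk} and \eqref{eq:out-risk} with $\widetilde X$ in place of $X$ using $\E[\xi\xi^\top]=I_d$, and conclude via Corollary~\ref{cor:hybrid}, i.e.\ Lemma~\ref{lem:kyfan} applied to $\widetilde M_\eta\succeq0$. Making the parameterization by $\widetilde B$ explicit (an oblique projector in the original coordinates) is a worthwhile clarification rather than a deviation: the paper leaves this implicit, and its runtime remark about the query code $\widetilde B^\top\Sigma_q^{-1/2}q$ and stored coefficients $\widetilde X\widetilde B$ assumes exactly this.
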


\begin{proof}
Since
$q=\Sigma_q^{1/2}\xi$,
\[
z(q)
=
\frac{Xq}{\sqrt d}
=
\frac{\widetilde X\xi}{\sqrt d},
\]
and
\[
\E[\xi\xi^\top]
=
\Sigma_q^{-1/2}
\Sigma_q
\Sigma_q^{-1/2}
=
I_d.
\]
The derivations of
\eqref{eq:key-risk} and \eqref{eq:out-risk}
therefore apply directly with
$\widetilde X$ in place of $X$.
Applying Corollary~\ref{cor:hybrid} in the whitened coordinates gives
\eqref{eq:whitened-meta}.
\end{proof}

At runtime, the corresponding query code is
$\widetilde B^\top\Sigma_q^{-1/2}q$, with stored coefficients
$\widetilde X\widetilde B$.
If $\Sigma_q$ is singular, whitening can be performed on its support or
using $\Sigma_q+\varepsilon I_d$ for numerical stability.

\subsubsection{Nonuniform within page attention}

The main method evaluates output sensitivity at uniform within page
attention. For a general reference point $z$, define
\[
A_{\mathrm{out}}(z)
=
S(\beta(z))
VGV^\top
S(\beta(z))
\succeq0.
\]
By Lemma~\ref{lem:jacobian}, for a small logit perturbation $e$,
\[
\norm{
f(z)-f(z-e)
}_G^2
=
e^\top
A_{\mathrm{out}}(z)
e
+
O(\norm{e}_2^3).
\]
This yields the Gauss Newton risk
\begin{equation}
\cR_{\mathrm{GN}}(B)
=
\frac1d
\E_{q\sim\mathcal D}
\left[
q^\top
(I_d-P_B)
X^\top
A_{\mathrm{out}}(z(q))
X
(I_d-P_B)
q
\right].
\label{eq:GN-risk}
\end{equation}

This gives a hierarchy of increasingly query dependent constructions.
The main method evaluates sensitivity at uniform attention and retains a
single query free eigensolution. A fixed nonuniform reference can instead
define a frozen local metric, while \eqref{eq:GN-risk} can be estimated
using sampled queries at the cost of losing the single closed form
eigenproblem.

\section{Rank and Page Budget Ablations}
\label{app:rank-budget-ablation}

We further evaluate sensitivity to the representation rank and sparse KV
budget on GPQA using DeepSeek R1 Qwen 7B. For the rank ablation, we fix the
budget to 2048 tokens and vary $r\in\{1,4,8,16\}$. For the budget ablation,
we fix $r=8$ and vary the retrieved KV budget over
$\{512,1024,2048,4096\}$ tokens. All other settings follow the main
experiments.

\begin{table}[t]
\caption{
Rank and sparse KV budget ablations on GPQA with DeepSeek R1 Qwen 7B.
The rank study fixes the budget at 2048 tokens, while the budget study fixes
$r=8$. Higher is better.
}
\label{tab:rank-budget-ablation}
\centering
\small
\setlength{\tabcolsep}{8pt}
\renewcommand{\arraystretch}{1.05}
\begin{tabular}{lc|c}
\toprule
\textbf{Ablation} & \textbf{Setting} & \textbf{GPQA} \\
\midrule

\multirow{4}{*}{Rank}
& $r=1$  & 36.67 \\
& $r=4$  & 38.75 \\
& $r=8$  & \textbf{39.58} \\
& $r=16$ & 39.58 \\
\midrule

\multirow{4}{*}{KV budget}
& $512$  & 32.50 \\
& $1024$ & 36.25 \\
& $2048$ & 39.58 \\
& $4096$ & \textbf{41.25} \\

\bottomrule
\end{tabular}
\vspace{-0.08in}
\end{table}

The rank ablation measures the sensitivity of retrieval quality to the
capacity of the page representation, while the budget ablation measures the
effect of increasing the number of retrieved KV tokens. Together, these
experiments test the robustness of the default configuration $r=8$ with a
2048 token budget used in the main evaluation.

\section{InfiniteBench Results}
\label{app:infbench-results}

We additionally evaluate on InfiniteBench to test a longer context regime
than the $64$K LongBench~v2 setup. Table~\ref{tab:infbench} reports the
LongQA results underlying Figure~\ref{fig:acc-vs-latency}(b), which average
En.MC, En.QA, Zh.QA, En.Sum, and En.Dia with $50$ examples per task.

\method{} reaches $37.4$ LongQA accuracy, compared with $36.0$ for FreeKV,
$35.9$ for ArkVale, and $35.5$ for RaaS, recovering $87.2\%$ of FullKV
accuracy versus $83.9\%$ for FreeKV. This gain costs $1.7$\,ms per token,
a $7.4\%$ decode overhead relative to FreeKV. \method{} is also faster than
both RaaS ($27.4$\,ms) and ArkVale ($76.9$\,ms), placing it on the
accuracy--latency Pareto frontier among the evaluated retrieval methods.
A gap of $5.5$ points to FullKV remains at this budget, indicating that
contexts beyond $100$K tokens remain challenging for all compressed
methods.

\begin{table}[t]
\small
\caption{
InfiniteBench LongQA at contexts exceeding $100$K tokens with Llama-3.1-8B.
LongQA averages En.MC, En.QA, Zh.QA, En.Sum, and En.Dia. All compressed
methods use a KV budget of $2048$; \method{} uses
$\eta = 0.25$. Overhead is decode latency relative
to FreeKV. Gold and silver rank compressed methods by accuracy; FullKV is
a reference and is excluded from ranking.
\vspace{-0.08in}
}
\label{tab:infbench}
\begin{center}
\setlength{\tabcolsep}{5pt}
\begin{tabular}{l|c|cccc}
\toprule
\textbf{Metric}
& FullKV
& ArkVale
& RaaS
& FreeKV
& \method{} \\
\midrule
LongQA accuracy
& 42.9
& 35.9
& 35.5
& \second{36.0}
& \best{37.4} \\
Fraction of FullKV (\%)
& 100.0
& 83.7
& 82.8
& 83.9
& 87.2 \\
\midrule
Decode latency (ms/token)
& --
& 76.9
& 27.4
& 23.0
& 24.7 \\
Overhead vs.\ FreeKV
& --
& +234.3\%
& +19.1\%
& 0.0\%
& +7.4\% \\
\bottomrule
\end{tabular}
\end{center}
\vspace{-0.1in}
\end{table}

\section{Implementation details and reproducibility}
\label{app:repro}

\begin{configbox}
\small
\cfglabel{Artifact.}
Code, the FreeKV patch, and per experiment scripts are available at
\url{https://anonymous.4open.science/r/oval-kv-4424}.
The implementation patches FreeKV commit \cfg{2c8a7d2} rather than
vendoring the upstream repository.
\end{configbox}

\subsection{Software, hardware, and build}
\label{app:repro-env}

\begin{configbox}
\small
\begin{tabularx}{\linewidth}{
@{}>{\bfseries}p{0.18\linewidth}X
>{\bfseries}p{0.16\linewidth}X@{}}
Inference
& Python 3.10, torch 2.5.1+cu121,
  transformers 4.45.2, flash-attn 2.7.4.post1
&
Judge
& Python 3.11, vLLM 0.9.2, transformers 4.53.2
\\[2pt]

GPU
& RTX A6000, 48\,GB, \texttt{sm\_86}
&
Driver
& CUDA 12.2
\\[2pt]

Build
& CMake + Ninja, target \texttt{freekv\_cpp}
&
Arch.
& \texttt{TORCH\_CUDA\_ARCH\_LIST=8.6}
\\[2pt]

Patch
& \texttt{patches/freekv.patch}
&
Size
& 989 lines, 19 files
\end{tabularx}
\end{configbox}

The CUDA extension is built from \texttt{source/freekv\_cpp}.
\texttt{CUDA\_HOME} points to the active conda environment and its
\texttt{nvcc}. Sources are discovered through
\texttt{file(GLOB "src/*.cu" "src/*.cpp")}, so adding a CUDA source file
requires no CMake modification.

\begin{caveatbox}
\small
\textbf{Environment constraints.}
\texttt{flash\_attn} is imported at module load and is therefore a hard
dependency. The judge requires transformers $<4.54$ because later versions
register an \texttt{aimv2} configuration that conflicts with vLLM 0.9.2.
We support cu121 through cu128; cu130 does not load with the CUDA 12.2 driver.
\end{caveatbox}

\subsection{Models}
\label{app:repro-models}

\begin{center}
\small
\setlength{\tabcolsep}{4pt}
\renewcommand{\arraystretch}{1.08}
\begin{tabularx}{\linewidth}{
@{}p{0.22\linewidth}X p{0.22\linewidth}@{}}
\rowcolor{reprobluebg}
\textbf{Tag} & \textbf{Checkpoint} & \textbf{Used for} \\
\texttt{llama-3.1-chat-8b}
& \texttt{NousResearch/Meta-Llama-3.1-8B-Instruct}
& LBv2, LGB, latency \\

\texttt{qwen-2.5-chat-7b}
& \texttt{Qwen/Qwen2.5-7B-Instruct}
& LBv2, LGB, latency, memory \\

\texttt{qwen-2.5-chat-14b}
& \texttt{Qwen/Qwen2.5-14B-Instruct}
& LBv2, LGB \\

\texttt{ds-r1-llama-8b}
& \texttt{deepseek-ai/DeepSeek-R1-Distill-Llama-8B}
& reasoning \\

\texttt{ds-r1-qwen-7b}
& \texttt{deepseek-ai/DeepSeek-R1-Distill-Qwen-7B}
& reasoning \\

\texttt{ds-r1-qwen-14b}
& \texttt{deepseek-ai/DeepSeek-R1-Distill-Qwen-14B}
& reasoning \\
\end{tabularx}
\end{center}

\begin{configbox}
\small
\cfglabel{Shared loading configuration.}

\medskip
\begin{tabularx}{\linewidth}{
@{}>{\bfseries}p{0.17\linewidth}X
>{\bfseries}p{0.17\linewidth}X@{}}
Precision
& bfloat16
&
Attention
& \cfg{eager}
\\[2pt]

Device mapping
& \cfg{auto}
&
CPU loading
& \cfg{low\_cpu\_mem\_usage=True}
\\[2pt]

Tokenizer
& slow, \cfg{use\_fast=False}
&
Context limit
& $65{,}536$ tokens
\end{tabularx}
\end{configbox}

Stop ids are taken from each checkpoint's \texttt{GenerationConfig}.
For Qwen2.5 14B, the default \texttt{device\_map="auto"} headroom can
place layers on meta tensors and cause the attention patch to fail.
We therefore increase the per device allowance through
\texttt{KVC\_MAX\_MEMORY}, using \texttt{45GiB} when required.

\subsection{Method configuration}
\label{app:repro-config}

\begin{configbox}
\small
\cfglabel{Core method configuration.}
We use page size $32$, rank $r=8$, and
$\eta\in\{0,0.25,0.5,0.75,1\}$ throughout.

\medskip
\begin{tabularx}{\linewidth}{@{}>{\bfseries}p{0.27\linewidth}X@{}}
Rank cap
& $\min(r,\text{page}-1,d)$ \\[2pt]

Record dtype
& bfloat16 \\[2pt]

GQA aggregation
& \texttt{avgSM} \\[2pt]

Spectral retrieval
& $2$ steps \\[2pt]

Correction group
& \texttt{avg} \\[2pt]

Correction threshold
& $\tau=0.8$ or $0.9$, depending on the benchmark \\[2pt]

Dense layers
& layer $0$ remains dense \\[2pt]

Last layer budget
& $0$
\end{tabularx}
\end{configbox}

\paragraph{Numerical construction.}
Summaries are constructed in float32 and cast back to the cache dtype.
$A_\eta$ is formed in the
$\text{page}\times\text{page}$ Gram domain.
We clamp traces at $10^{-30}$ and normalize each matrix to unit mean
diagonal before eigendecomposition. Eigenvalues are clamped at $0$ before
the square root and singular values at $10^{-10}$ before basis construction.
Because centering removes one degree of freedom, a page of size $32$
supports rank at most $31$.

\subsection{Benchmark protocols}
\label{app:repro-protocols}

\begin{center}
\scriptsize
\setlength{\tabcolsep}{2.3pt}
\renewcommand{\arraystretch}{1.14}
\begin{tabularx}{\linewidth}{
@{}>{\bfseries}p{0.14\linewidth}
>{\centering\arraybackslash}X
>{\centering\arraybackslash}X
>{\centering\arraybackslash}X
>{\centering\arraybackslash}X@{}}

\rowcolor{reprobluebg}
&
\textbf{Reasoning}
&
\textbf{LongBench v2}
&
\textbf{LongGenBench}
&
\textbf{InfiniteBench}
\\

Data
& AIME $30$, MATH $50$, GPQA $50$
& $503$: $180/215/108$
& $400$, $100$/type
& $11\times50$
\\

$\mathcal B$
& $2048$ & $2048$ & $2048$ & $2048$
\\

$\mathcal S=\mathcal W$
& $512$ & $128$ & $512$ & $128$
\\

CLI budget
& $2048+512+512$
& $1792+128+128$
& $1024+512+512$
& $1792+128+128$
\\

\rowcolor{reprograybg}
$\tau$
& $0.9$ & $0.8$ & $0.9$ & $0.8$
\\

Decoding
& $T=.6,\ p=.95$
& greedy
& $T=.95,\ p=.95$
& greedy
\\

Max gen.
& $16{,}384$
& $128$
& $16{,}000$
& task specific
\\

Input
& unchanged
& middle truncate $65{,}536$
& unchanged
& middle truncate $65{,}536$
\\

Stop
& EOS
& EOS
& \texttt{*** finished}
& EOS
\\

Seed
& $42$ & $42$ & $42$ & $42$
\\

\rowcolor{reprograybg}
Metric
& pass@$k$, avg@$k$
& Overall/S/M/L
& CR, CR$\times$Acc
& official scorers
\end{tabularx}
\end{center}

\begin{configbox}
\small
\cfglabel{Shared sparse retrieval settings.}

\medskip

\texttt{--method spec\_ret}
\quad
\texttt{--page\_rep oval}
\quad
\texttt{--oval\_rank 8}
\quad
\texttt{--eta $\eta$}
\quad
\texttt{--GQA\_policy avgSM}

\texttt{--spec\_ret\_steps 2}
\quad
\texttt{--page\_size 32}
\quad
\texttt{--sparsity 1}
\quad
\texttt{--skip\_layer 1}

\texttt{--last\_layer\_budget 0}
\quad
\texttt{--corr\_group avg}
\quad
\texttt{--batch\_size 1}.
\end{configbox}

Baseline arms differ only in \texttt{--page\_rep}
(\texttt{quest}, \texttt{arkv}) or \texttt{--method}
(\texttt{raas}, \texttt{full}).

\paragraph{Budget matching.}
Every method uses the same effective KV budget,
$\mathcal B=2048$.
The accuracy and performance harnesses express this budget through different
combinations of selectable, sink, and recent regions; differences in the
raw CLI fields therefore do not imply different effective budgets.

\begin{caveatbox}
\small
\begin{tabularx}{\linewidth}{@{}p{0.40\linewidth}X@{}}
Released code & Reported experiments \\[1pt]
\hline
$120{,}000$ token truncation & $65{,}536$ \\
LongGenBench $p=1.0$ & $p=0.95$ \\
authors' absolute judge path & local judge path \\
\texttt{page\_rep=quest} & \texttt{page\_rep=oval} \\
\texttt{GQA\_policy=avgS} & \texttt{GQA\_policy=avgSM}
\end{tabularx}
\end{caveatbox}

\paragraph{LongBench v2.}
We use \texttt{0shot.txt} without chain of thought or retrieval.
Inputs exceeding $65{,}536$ tokens retain the first and last $32{,}768$.
Decoding is greedy \texttt{argmax}; although the harness exposes a
temperature argument, it is unused. A prediction is valid only if
\texttt{"The correct answer is (X)"} is produced within $128$ tokens.
Consequently, $6$ to $13\%$ of examples remain unparsable because
generation is still in the reasoning phase at the limit; this behavior is
shared across methods.
\texttt{KVC\_MAX\_TOKENS=65536} fixes the summary slab size.

\paragraph{Reasoning.}
We sample $k=8$ generations per problem with
$T=0.6$ and $p=0.95$.
Independent generations use separate \texttt{--seed} values, encoded by
the harness using a \texttt{-seed$n$} suffix for nondefault seeds.
MATH500 and GPQA use the $50$ problem subsets distributed with the harness.

\paragraph{InfiniteBench.}
Because InfiniteBench is absent from upstream FreeKV, we add its evaluator
and execute both \method{} and all baseline arms through the same harness.
Each task uses $50$ examples sampled with seed $42$ rather than the first
$50$ records. This avoids bias from the presorted distributed files,
including \texttt{kv\_retrieval}, which is ordered by needle position.
Prompt templates, task specific generation limits, and official scorers
come from the InfiniteBench reference implementation.
We omit \texttt{math\_calc}, leaving $11$ tasks.

\paragraph{LongGenBench judge.}
The evaluator is Qwen3 32B with tensor parallelism $2$,
$T=0.95$, $p=0.95$, maximum generation $50$, seed $42$,
\texttt{max\_model\_len=20480},
\texttt{max\_num\_seqs=8}, and
\texttt{gpu\_memory\_utilization=0.82}.
On systems without NVLink,
\texttt{NCCL\_P2P\_DISABLE=1} avoids initialization deadlock.
Because prediction files append rather than overwrite, the judge evaluates
the final $400$ records and requires the file length to be a multiple of
$400$.

\subsection{Latency Evaluation}
\label{app:repro-perf}

\begin{configbox}
\small
\begin{tabularx}{\linewidth}{
@{}>{\bfseries}p{0.22\linewidth}XX@{}}
&
\textbf{Long input}
&
\textbf{Long generation}
\\[1pt]

Dataset
& \texttt{gov\_report}
& \texttt{lgbench}
\\

Input
& repeated and truncated to $32{,}000$
& $620$ to $660$ tokens
\\

Generated tokens
& $512$
& $16{,}384$
\\

$\tau$
& $0.8$
& $0.9$
\\

Batch
& $1,2,4$
& $1,2,4$
\end{tabularx}
\end{configbox}

\paragraph{Timing.}
Measurements use \texttt{source/pred.py}.
One complete generation is used as warmup and all of its per step
measurements are discarded. We report the mean interval between subsequent
generated tokens. Every method uses $\mathcal B=2048$.

The performance harness uses
\texttt{--budget 2048 --sink 512 --recent 512 --page\_size 32}.
Sink and recent regions are accounted for internally when constructing the
selectable budget.

\begin{notebox}
\small
\textbf{Fixed offload configuration.}\quad
$6$\,GiB GPU page pool
\quad$\cdot$\quad
$20$\,GiB host pool
\quad$\cdot$\quad
\texttt{HND}
\quad$\cdot$\quad
$2$ recall threads
\quad$\cdot$\quad
$2$ recall streams
\quad$\cdot$\quad
\texttt{FREEKV\_WORKSPACE\_MIB=512}.
\end{notebox}

The measured arms are ArkVale
(\texttt{--recall\_impl arkvale}), RaaS
(\texttt{--method raas}, drop only, $\alpha=0.02$), FreeKV
(\texttt{--spec\_ret --corr $\tau$}), and \method{} with the same FreeKV
configuration plus
\texttt{--page\_rep oval --eta $\eta$}.
\texttt{OVAL\_QUANT} selects bfloat16 or packed records.

LongBench v2 latency uses
\texttt{--dataset longbench2},
\texttt{--lb2\_length \{short,medium,long\}},
items $148/190/39$,
\texttt{--max\_length 65536}, and
\texttt{--max\_gen 128}.

\subsection{Decoding released output paths}
\label{app:repro-paths}

\begin{notebox}
\footnotesize
\centering
\texttt{
$\langle$model$\rangle$-spec\_ret-s128-r128-1.00-manual-p32-b1792-
avgSM-oval\_r8\_eta0.25\_q0-pQ2-llb0-pQcs0.8-skl1
}
\end{notebox}

\noindent
The fields decode as follows:
\texttt{s/r} = sink/recent;
\texttt{1.00} = sparsity;
\texttt{manual} = attention pattern;
\texttt{p32/b1792} = page size/budget field;
\texttt{avgSM} = GQA policy;
\texttt{oval\_r8} = representation and rank;
\texttt{\_eta} = $\eta$;
\texttt{\_q} = packed record indicator;
\texttt{pQ2} = retrieval steps;
\texttt{llb0} = last layer budget;
\texttt{pQcs0.8} = $\tau$; and
\texttt{skl1} = number of initially dense layers.
The optional
\texttt{-t}, \texttt{-topp}, and \texttt{-Mg}
fields appear only when they differ from their defaults.

\subsection{Corrections to the reference harness}
\label{app:repro-fixes}

\begin{notebox}
\small
\textbf{1. Index range selection does not slice metadata.}

Prompts are sliced but the associated metadata is not, so generations from
a selected subrange are evaluated against another example's expected
structure. We correct the metadata indexing together with the prompt range.
\end{notebox}

\begin{notebox}
\small
\textbf{2. Head reordering can duplicate columns.}

When full and dynamic attention masks overlap, the reconstructed output
projection can exceed $n_{\text{heads}}d$.
On Qwen2.5 14B we observe width $5151$ rather than $5120$.
The issue occurs only under multi GPU sharding.
We therefore derive the GQA group size from the reordered projection
instead of \texttt{q\_proj}/\texttt{k\_proj}, validate the resulting width,
and fail explicitly on mismatch.
\end{notebox}

\subsection{Scope and reported quantities}
\label{app:repro-scope}

\begin{configbox}
\small
\textbf{What \method{} changes.}\quad
Only the page representation used for \emph{retrieval}.
After top $k$ selection, attention is computed using the original keys of
the selected pages, exactly as in the compared retrieval baselines.
\end{configbox}

In the accuracy harness, the full KV cache remains resident and page
summaries are additional state. These experiments therefore quantify \emph{sparse retrieval} rather than compressed key attention.

\paragraph{Best over $\eta$.}
For each reported metric,
\[
\eta^\star
=
\arg\max_{\eta\in\{0,0.25,0.5,0.75,1\}}
\operatorname{metric}(\eta).
\]
The maximizing value may differ across metrics for the same model.
Accordingly, entries within a complete table row do not necessarily arise
from one common $\eta$.
Section~\ref{sec:exp-eta-ablation} reports the corresponding sensitivity.

\end{document}